\documentclass[sigconf]{acmart}
\setcopyright{none}
\renewcommand\footnotetextcopyrightpermission[1]{}
\pagestyle{plain}
\settopmatter{printacmref=false}

\AtBeginDocument{%
  }

\usepackage{appendix}

\makeatletter
\newif\if@restonecol
\makeatother

\usepackage[ruled,vlined]{algorithm2e}

\SetKwInOut{Input}{Input}
\SetKwInOut{Output}{Output}

\begin{document}

\title{DCFO: Density-Based Counterfactuals for Outliers - Additional Material}

\author{Tommaso Amico}
\authornotemark[1]
\email{tomam@cs.au.dk}
\affiliation{%
  \institution{Department of Computer Science}
  \city{Aarhus}
  \country{Denmark}
}
\author{Pernille Matthews}

\authornote{Both authors contributed equally to this research.}
\email{matthews@cs.au.dk}
\affiliation{%
  \institution{Department of Computer Science}
  \city{Aarhus}
  \country{Denmark}
}

\author{Lena Krieger}
\affiliation{%
  \institution{Forschungszentrum Jülich}
  \city{Jülich}
  \country{Germany}}
\email{l.krieger@fz-juelich.de}

\author{Arthur Zimek}
\affiliation{%
  \institution{Department of Computer Science and Mathematics}
  \city{Odense}
  \country{Denmark}
}

\author{Ira Assent}
\affiliation{%
 \institution{Department of Computer Science}
 \city{Aarhus}
 \country{Denmark}}






\begin{abstract}
    Outlier detection identifies data points that significantly deviate from the majority of the data distribution. Explaining outliers is crucial for understanding the underlying factors that contribute to their detection, validating their significance, and identifying potential biases or errors. Effective explanations provide actionable insights, facilitating preventive measures to avoid similar outliers in the future. Counterfactual explanations clarify why specific data points are classified as outliers by identifying minimal changes required to alter their prediction. Although valuable, most existing counterfactual explanation methods overlook the unique challenges posed by outlier detection, and fail to target classical, widely adopted outlier detection algorithms. Local Outlier Factor (LOF) is one the most popular unsupervised outlier detection methods, quantifying outlierness through relative local density. Despite LOF's widespread use across diverse applications, it lacks interpretability. To address this limitation, we introduce \emph{Density-based Counterfactuals for Outliers (DCFO)}, a novel method specifically designed to generate counterfactual explanations for LOF. DCFO partitions the data space into regions where LOF behaves smoothly, enabling efficient gradient-based optimisation. Extensive experimental validation on $50$ OpenML datasets demonstrates that DCFO consistently outperforms benchmarked competitors, offering superior proximity and validity of generated counterfactuals.
\end{abstract}

\begin{CCSXML}
<ccs2012>
<concept>
<concept_id>10010147.10010257.10010258.10010260.10010229</concept_id>
<concept_desc>Computing methodologies~Anomaly detection</concept_desc>
<concept_significance>500</concept_significance>
</concept>
<concept>
<concept_id>10002978.10002997</concept_id>
<concept_desc>Security and privacy~Intrusion/anomaly detection and malware mitigation</concept_desc>
<concept_significance>500</concept_significance>
</concept>
<concept>
<concept_id>10002950.10003648.10003662.10003665</concept_id>
<concept_desc>Mathematics of computing~Computing most probable explanation</concept_desc>
<concept_significance>500</concept_significance>
</concept>
<concept>
<concept_id>10003752.10003809.10003716.10011138</concept_id>
<concept_desc>Theory of computation~Continuous optimization</concept_desc>
<concept_significance>100</concept_significance>
</concept>
<concept>
<concept_id>10002950.10003714.10003732.10003734</concept_id>
<concept_desc>Mathematics of computing~Differential calculus</concept_desc>
<concept_significance>100</concept_significance>
</concept>
<concept>
<concept_id>10002950.10003741.10003742</concept_id>
<concept_desc>Mathematics of computing~Topology</concept_desc>
<concept_significance>100</concept_significance>
</concept>
</ccs2012>
\end{CCSXML}

\ccsdesc[500]{Computing methodologies~Anomaly detection}
\ccsdesc[500]{Security and privacy~Intrusion/anomaly detection and malware mitigation}
\ccsdesc[500]{Mathematics of computing~Computing most probable explanation}
\ccsdesc[100]{Theory of computation~Continuous optimization}
\ccsdesc[100]{Mathematics of computing~Differential calculus}
\ccsdesc[100]{Mathematics of computing~Topology}

\keywords{Counterfactual Explanation, Outlier Explanations, Outlier Detection, Explainable Artificial Intelligence, Interpretable Machine Learning}


\maketitle

\section{Introduction}
    Outliers, or anomalies, are generally defined as a subset of observations that are inconsistent with the rest of the dataset~\cite{hodge2004survey, wang2019progress, zhou2025eace, angiulli2023counterfactuals, grubbs1969procedures, barnett1994outliers,ZimekF18}. Detecting these outliers is crucial across numerous applications, including identifying system faults, fraudulent activities, malicious transactions, and sensor malfunctions~\cite{hodge2004survey, panjei2022survey}.
    
    Explaining detected anomalies can be a key step in the detection pipeline. The explanation enables stakeholders to understand why the instance was flagged as an outlier, allowing them to take proper action~\cite{panjei2022survey}. In an application designed to detect malicious card transactions, a clear explanation can validate the outlier as a genuine attack or identify it as merely an unusual but non-malicious event. Moreover, as in other Explainable Artificial Intelligence (XAI) applications, interpreting the outputs of an outlier detection model builds trust in the model, and understanding the rationale behind its predictions enables actionability when the model does not behave as expected~\cite{Das2020OpportunitiesAC}. Despite the importance and impact of outlier detection models, their explanation is often overlooked~\cite{zhou2025eace}.

    Counterfactual explanations, a popular XAI technique, identify minimal changes required to adjust a model's prediction~\cite{romashov2022baycon, verma2020counterfactual}. In outlier detection contexts, counterfactuals highlight the specific features causing the outlier and suggest actionable modifications to convert anomalous instances into inliers. 
    
    For instance, consider an assembly line equipped with numerous temperature and pressure sensors that typically report values within a consistent range. If one sensor detects a temperature outside this range, a Local Outlier Factor (LOF)~\cite{breunig2000lof} model would flag this as anomalous. Stakeholders would then question if this spike indicates a genuine overheating issue or simply a sensor malfunction. A counterfactual explanation would clarify the precise adjustment needed to restore normal readings, guiding practical interventions. A raw LOF score alone would not suffice, as the method provides no attribution of which features are responsible for the outlier and offers no guidance on which changes would be required to bring the measurement back into the normal operating region.
    
    However, standard outlier detection techniques, such as LOF, present significant challenges for traditional counterfactual methods. LOF calculates outlierness based on local density comparisons among neighbouring points. Consequently, small changes in an observation can abruptly change its nearest neighbours, causing discontinuous jumps in outlier scores. Such discontinuities render gradient-based optimisation methods ineffective since they depend on smooth prediction transitions.

    Alternative approaches like Bayesian optimisation handle discontinuities by exploring many possibilities without gradients, but their computational cost becomes prohibitive in high-dimensional settings~\cite{romashov2022baycon}.
    Counterfactual methods have traditionally targeted classification models. Although outlier detection can be framed this way, it is highly imbalanced, making counterfactual search difficult~\cite{zhou2025eace}. Outliers are a small minority, often far from inliers and even from each other~\cite{zhou2025eace}.
    

    In this work, we propose \emph{Density-based Counterfactuals for Outliers (DCFO)}, a novel counterfactual method designed explicitly for explaining LOF outputs~\cite{breunig2000lof}. DCFO exploits the inherent structure of LOF by partitioning the data space into meaningful regions, enabling efficient gradient-based optimisation despite the underlying non-continuous nature of LOF scores. Our method also supports handling non-actionable features, produces diverse counterfactuals, and is able to adjusts counterfactuals' plausibility.

    Our main contributions include:
    \begin{itemize}
      \item The first Local Outlier Factor counterfactual explanation method, DCFO, which partitions the data space into regions, and subsequently performs gradient-based optimisation to produce counterfactuals. 
 
      \item DCFO's algorithmic modifications to handle non-actionable features, generate multiple diverse counterfactuals, and take plausibility into account.
      
      \item Empirical benchmarks using real-world datasets, showing how DCFO's counterfactuals perform better than competitors in terms of proximity, validity and diversity.
    \end{itemize}

    In the remainder of the paper, we discuss related works in Section~\ref{sec:related}. DCFO is thoroughly introduced in Section~\ref{sec:method}. Experimental evaluation is carried out in Section~\ref{sec:experiments}, while conclusions and discussion on future works are in Section~\ref{sec:conclusion}.

\section{Related Works}\label{sec:related}
We first introduce outlier detection methods in Section~\ref{subsec:OD}, and then examine explainable outlier detection approaches in Section~\ref{subsec:XOD}.

\subsection{Outlier Detection Methods}\label{subsec:OD}
Outlier detection has a history of unsupervised methods identifying data points which deviate from the norm. Early approaches include distance-based methods, which flag points whose distance to their $k$-nearest neighbours exceeds a predefined threshold, and density-based methods, which consider the sparsity of local neighbourhoods to identify anomalies. A prominent example of the latter is Local Outlier Factor (LOF), which computes an ``outlier score'' comparing the local density of a point to that of its neighbours, marking points with significantly lower density as outliers~\cite{breunig2000lof}. 

Given that LOF introduces a degree of outlierness rather than a binary inlier/outlier categorisation, variants and related methods to LOF include Simplified LOF (SLOF)~\cite{schubert14local}, Influenced Outlierness (INFLO)~\cite{jin06ranking}, and Dimensionality-Aware Outlier Detection (DAO)~\cite{Anderberg0CHMRZ24}, which modify how local density is measured and compared. SLOF modifies the computation by replacing the reachability distances with the inverse $k$-nearest neighbour distance; this avoids calculating the reachability computation step, to reduce the time complexity. INFLO considers reverse nearest neighbours, which are the points that have the query point as one of their nearest neighbours. The reverse nearest neighbours are incorporated into the density estimation, providing more reliable outlier scores in sparse regions where traditional LOF may underestimate the density. DAO estimates local intrinsic dimensionality and uses a density-estimation adjusted to the estimated local dimensionality.

Other popular outlier detection methods include the $k$-nearest neighbours (KNN) approach~\cite{ramaswamy2000efficient} and a method based on variance of angles (ABOD)~\cite{KriegelSZ08}. KNN assigns an outlier score to each point equal to its distance to the $k-$th nearest neighbour, effectively capturing global deviations. The ABOD outlier score takes into account how much the angles needed to see pairs of other points vary -- anomalous points exhibit smaller angle variance among pairs of other points than inliers.

Clustering-based methods~\cite{VincesSZC25} include $k-$means$--$~\cite{chawla13kmeans}, which integrates clustering and outlier detection by iteratively removing points with the greatest distances from their nearest cluster centroids before updating centroid positions, thereby identifying points that lie far from all clusters as outliers.
The hierarchical density-based method GLOSH~\cite{campello15hierarchical} extends LOF's ideas by leveraging hierarchical density estimations (otherwise used for hierarchical clustering) to detect outliers across multiple density levels.

\subsection{Explainable Outlier Detection (XOD)}\label{subsec:XOD}
Explainable Outlier Detection (XOD) achieves two objectives. XOD first identifies outlying instances and then provides explanations for why an instance is considered an outlier. One can consider two categories when considering XOD methods: (i) shallow and symbolic explanations, which explain outliers through features or rules, and (ii) counterfactual explanations, which reveal how an outlying instance can be modified to become an inlier~\cite{panjei2022survey,li2023survey,sejr2021explainable}.

\paragraph{Shallow XOD Methods} 
Early and popular XOD methods focus on describing outliers through feature attributions, summaries and logical rules. One of the earliest works is from Knorr et al. with their notion of intensional knowledge~\cite{knorr99finding}. Rather than simply finding an outlier, their method identifies a minimal set of attribute constraints isolating the instance from its nearest neighbours.

Consider an example for credit-card fraud detectors. A specific outlying transaction is captured by the explanation: 
    $$
    \mbox{\{\texttt{Amount} > 1{,}000, \texttt{HourOfDay} < 6\}}.
    $$
    
The intensional-knowledge method would mine two conditions: ``High amount'' and ``Odd time'', together they separate the transaction from its $10$ nearest non-fraud examples in the feature space. When analysts read such an explanation, they know immediately: \textbf{(i)} large late-night purchases are an issue, and \textbf{(ii)} no other features are needed to explain this outlier.

The approach has three key benefits, which are consistently observed in the XOD literature. Conciseness, as the explanation uses as few features as possible to explain the outlier, interpretability, as the explanation is immediately interpretable by non-technical users. Lastly, validation is possible; it is fast to check the data distribution, and analysts can validate whether the flagged point truly deviates from normal patterns.

Anguilli et al.~\cite{angiulli09detecting} define outlying properties by quantifying how extreme each attribute and outlier values are relative to the overall distribution. Müller et al.~\cite{muller12outrules} extends this notion to OutRules, generating logical rules across multiple contexts; characterising outlying behaviour within each context. While these methods provide symbolic explanations, they provide only static descriptions of \emph{``why''} an outlier is an outlier, but do not include actionable changes, i.e., \emph{``how''} to fix the outliers. 

\paragraph{Counterfactual Explanations}
A counterfactual explanation for an outlier would answer ``How could this outlier be modified to appear normal?''. Answering the important ``how'' question is crucial for outlier detection as it provides insight into why an instance is an outlier and how it can be ``fixed''.

Some methods use counterfactuals for anomaly detection~\cite{liguori2024robust, fontanella2024diffusion}, e.g., generating healthy brains as counterfactuals to detect anomalies in brain images~\cite{fontanella2024diffusion}, while most focus on explaining identified outliers.

Several methods for explaining outliers focus on data types like time series or categorical data. Counterfactual ensemble explanations~\cite{sulem2022diverse} explain time series identified as anomalous by a detection method. The approach is limited to differentiable anomaly detection methods.
Trifunov et al.~\cite{trifunov2021anomaly} propose an attribution scheme based on the Maximally Divergent Interval algorithm. The idea consists in changing a subset of features to align with the respective feature distribution of the time series and recalculate how the anomaly changed.
Context preserving Algorithmic Recourse for Anomalies in Tabular data (CARAT)~\cite{datta2022framing} is a transformer-based encoder-decoder model, solely for categorical data.

More recently, methods solving counterfactual generation as an optimization problem based on newly defined loss functions have been proposed.
Explain Anomaly via Counterfactual Explanations (EACE)~\cite{zhou2025eace} is an approach based on optimizing a loss function which includes a classification change term, a density term, and a boundary loss component. The density is estimated through LOF. To solve the optimization problem, they propose a genetic algorithm and thus, EACE can work also with non differentiable models like LOF. The proposed genetic algorithm performs suboptimally in datasets where outlying instances are not extremely distant from other points~\cite{zhou2025eace}. 
Angiulli et al.~\cite{angiulli2023counterfactuals} propose Masking Models for Outlier Explanation (M$^2$OE): a deep-neural network guided by an ad-hoc loss function.
The generated explanations consist of a set of features and a respective set of changes. The loss function incorporates the isolation of the input outlier before and after applying the generated mask within the subspace defined by the features choice.

Another approach to counterfactual explanations is solving a feasible path finding problem, i.e., constructing a path from the data point to be explained towards the counterfactual.
Two methods employ LOF to solve this problem. Yamao et al.~\cite{yamao2024distribution} sequentially change the input feature vector until it obtains the desired classification result, employing a cost function based on LOF to assess feasibility and actionability of the path. Their proposal struggles with validity and does not account for non-actionable features.

DACE~\cite{kanamori2020dace} follows a similar approach: to obtain a feasible perturbed vector, they introduce a cost function considering feature correlation and data distribution solved with mixed-integer linear optimization. To account for the data distribution they employ LOF. DACE focuses on additive classifiers only.

Baycon~\cite{romashov2022baycon} is a counterfactual method that uses probabilistic feature sampling and Bayesian optimisation. Its objective balances similarity in both the feature and output space while minimising the number of modified features. Although not solely designed specifically for outliers, Baycon is widely used in our benchmarks, and its Bayesian formulation makes it well-suited for non-differential models like LOF. 

Closely related to counterfactuals is the idea of outlier correction, i.e., transforming outliers into inliers. Ji et al.~\cite{ji2024ar} use generative models to achieve this by producing diffusion-based counterfactuals, with a primary focus on diffusion techniques.
\section{Methods}\label{sec:method}
    The goal of DCFO is to generate counterfactual explanations for outliers identified by Local Outlier Factor (LOF). In this section, we first introduce LOF. Next, we present our proposed DCFO method and outline the steps for generating multiple diverse counterfactuals. Finally, we demonstrate how DCFO addresses non-actionable features and the plausibility of the proposed counterfactuals.


\subsection{Local Outlier Factor}\label{subsec:lof}
    Local Outlier Factor, first proposed by Breunig et al.~\cite{breunig2000lof}, scores the outlierness of points based on their relative density, i.e., comparing their density with that of their neighbours. A point, whose density is comparable with that of its neighbours, has an LOF score of approximately $1$. The lower the density of the point compared to its neighbours, the higher the outlierness. The original paper suggests a threshold of $1.5$ as good practice for separating inliers and outliers, i.e., a point is identified as an outlier if the average density of its neighbours is one and a half times higher than the density of the point itself. Another common practice for setting the threshold is to take a small quantile of the LOF scores distribution, e.g., $0.1$.
    
    We assume $\mathcal{D}$ to be a dataset comprising $n$ data points $\mathbf{p_1}, \dots, \mathbf{p_n}$. The notation $\mathbf{x}$ represents a location (or point) in the feature space $\mathcal{X}$. The function $d$ denotes the distance measure used to compute how far apart two points are in the feature space.
    
    To evaluate the $\operatorname{LOF}$ score of a point $\mathbf{p}$, we need to first compute its $k$-distance, i.e., the distance of its $k$-th nearest neighbour. $k$ is the singular parameter required by $\operatorname{LOF}$, and defines the notion of density by defining the neighbourhood of a point. We denote the set of $k$-nearest neighbours of a point $\mathbf{p_i}$ as $knn(\mathbf{p_i})$.

    To reduce statistical fluctuations among nearby points, the LOF algorithm employs the \emph{reachability distance}, which, while not being a true metric, is designed to stabilise density estimates. The reachability distance $rd_k(\mathbf{p_i}, \mathbf{p_j})$ between points $\mathbf{p_i}$ and $\mathbf{p_j}$ is defined as:
    
    \begin{equation}
        rd_k(\mathbf{p_i}, \mathbf{p_j}) = \max \left \{ k\text{-distance}(\mathbf{p_j}), d(\mathbf{p_i}, \mathbf{p_j}) \right \}.
    \end{equation}
    
    Specifically, when a point $\mathbf{p_j}$ lies closer than the $k$-distance of $\mathbf{p_i}$, the reachability distance $rd_k(\mathbf{p_i}, \mathbf{p_j})$ is set to $k\text{-distance}(\mathbf{p_j})$. This adjustment mitigates noise from very close points and yields more robust outlier scores~\cite{breunig2000lof}.
    
    LOF's notion of local density is represented by the local reachability density ($lrd_k$) of a point $\mathbf{p_i}$:
    \begin{equation}\label{eq:lrd}
            lrd_k(\mathbf{p_i}) = \dfrac{k}{
            \sum\limits_{\mathbf{p_j}\in knn(\mathbf{p_i})} rd_k(\mathbf{p_i}, \mathbf{p_j})}
             \quad ,
        \end{equation}
    which is the inverse of the average reachability distance from $\mathbf{p_i}$ to its neighbours.
        The $\operatorname{LOF}$ score of $\mathbf{p_i}$ is defined as:
    \begin{equation}\label{eq:LOF}
            \operatorname{LOF(\mathbf{p_i})} = \dfrac{1}{k \cdot lrd_k(\mathbf{p_i})} \cdot \sum\limits_{\mathbf{p_j} \in knn(\mathbf{p_i})}lrd_k(\mathbf{p_j}). 
        \end{equation}
    Hence, in Eq.~\ref{eq:LOF}, $\operatorname{LOF}$ compares the local density of a point $\mathbf{p_i}$, captured by its local reachability density ($lrd$), with the average $lrd$ of its $k$ nearest neighbours. 

    Formally, any $\mathbf{p_i}$ with $\operatorname{LOF}(\mathbf{p_i}) > t$ is labelled as an outlier.

    \subsection{Density-based Counterfactuals for Outliers}\label{subsec:dcfo}
        Density-based Counterfactuals for Outliers (DCFO) generates counterfactual proposals for data points flagged as outliers by $\operatorname{LOF}$. Given an outlier point $\mathbf{p_i}$, our goal is to find a nearby point $\mathbf{x}$ in the feature space such that, if $\mathbf{p_i}$ were hypothetically moved to $\mathbf{x}$, its LOF score would fall below the outlier threshold $t$. Thus, in the following, $\operatorname{LOF}(\mathbf{x})$ refers to the LOF score that would be assigned to $\mathbf{p}_i$ if it were hypothetically located at position $\mathbf{x}$.
        
        To evaluate $\operatorname{LOF}(\mathbf{x})$ in this context, we compute the score at location $\mathbf{x}$, treating $\mathbf{x}$ as a new data point and excluding the original $\mathbf{p_i}$ from all $k$-NN computations. This ensures that $\mathbf{x}$ is judged solely based on its local environment in feature space.
    
        Formally, denoting the resulting counterfactual as $\mathit{cf}(\mathbf{p_i})$, we solve the following constrained optimisation problem:
        \begin{equation}\label{eq:optimisationProblem}
            \mathit{cf}(\mathbf{p_i}) = \min_\mathbf{x \in \mathcal{X}} d(\mathbf{x}, \mathbf{p_i}) \quad \big | \quad  \operatorname{LOF}(\mathbf{x}) \le t \quad
        \end{equation}
    
        \noindent In Eq.~\ref{eq:optimisationProblem}, the optimisation problem seeks the smallest possible change, measured with distance $d$, that transforms an outlier into an inlier under the LOF model. As distance metric, we use Euclidean, as it is widely used and the default in LOF. However, a key challenge is that the constraint ($\operatorname{LOF}(\mathbf{x}) \le t$) is not differentiable, preventing the use of standard gradient-based optimisers.

        To address this, we partition the data space into regions where the LOF score behaves continuously and is differentiable. Within each region, we can apply gradient-based optimisation techniques to search for valid counterfactuals efficiently.

        To compute the $\operatorname{LOF}$ score of a candidate point $\mathbf{x}$, we need more than just the location of $\mathbf{x}$ itself. According to Eq.~\ref{eq:lrd} and Eq.~\ref{eq:LOF}, the LOF score depends on the local reachability densities of $\mathbf{x}$ and of its $k$ nearest neighbours. In turn, these densities require access to each neighbour's neighbourhood. Therefore, to evaluate $\operatorname{LOF}(\mathbf{x})$, we must identify \textbf{(i)} the $k$ nearest neighbours of $\mathbf{x}$, denoted as $knn(\mathbf{x})$, and \textbf{(ii)} for each neighbour $\mathbf{p_j} \in knn(\mathbf{x})$, their respective $k$ nearest neighbours $knn(\mathbf{p_j})$.

        We use this neighbourhood structure to define a unique \emph{key} that characterises a region of the space. Given point $\mathbf{x}$, we define its key:
        \[
        \mathcal{K}(\mathbf{x}) \equiv \left( knn(\mathbf{x}),\ knn(\mathbf{p_j})\ \forall\ \mathbf{p_j} \in knn(\mathbf{x}) \right).
        \]
        This key encodes all data points whose positions influence the LOF score of $\mathbf{x}$. We then define a region $R_K$ as the set of all points sharing the same key $K$:
        \begin{equation}\label{eq:regions}
                R_K \equiv \left\{ \mathbf{x} \in \mathcal{X} \,\big|\, \mathcal{K}(\mathbf{x}) = K \right\}.
        \end{equation}
    
        These regions $R_K$ form a partition of the input space, such that within each region, the set of   points relevant to LOF computations remains fixed. This property ensures, as will be proved in the following, that the LOF score varies smoothly within a region, making it suitable for gradient-based optimisation. As stated above, $\operatorname{LOF(\mathbf{x})}$, and consequently the space partition, depends on the point $\mathbf{p}_i$ being moved.

        Note that the symbol $K$ used to denote a region's key is unrelated to the lowercase $k$ used in $\operatorname{LOF}$ to indicate the number of neighbours. Also, if the key were defined using only $knn(\mathbf{x})$ (and not the neighbours-of-neighbours), the resulting partition would correspond to a $k$-th order Voronoi diagram, i.e., a decomposition of the space based on each point's $k$ nearest neighbours in the dataset.

        $\operatorname{DCFO}$ does not require computing the whole partition of the feature space in advance. Instead, it only estimates the key $\mathcal{K}(\mathbf{x})$ for selected points encountered during optimisation. This design choice is crucial for scalability: computing the complete $k$-th order Voronoi diagram is known to be computationally expensive, especially in high dimensions~\cite{aurenhammer1996voronoi}. The regions defined by Eq.~\ref{eq:regions} are even more complex than standard Voronoi cells, since they depend not only on $knn(\mathbf{x})$ but also on the neighbours of each neighbour. Constructing the entire space partition would, therefore, be inefficient and unnecessary.
    
        \begin{figure}[tbp]
            \centering
            \includegraphics[width=1\linewidth]{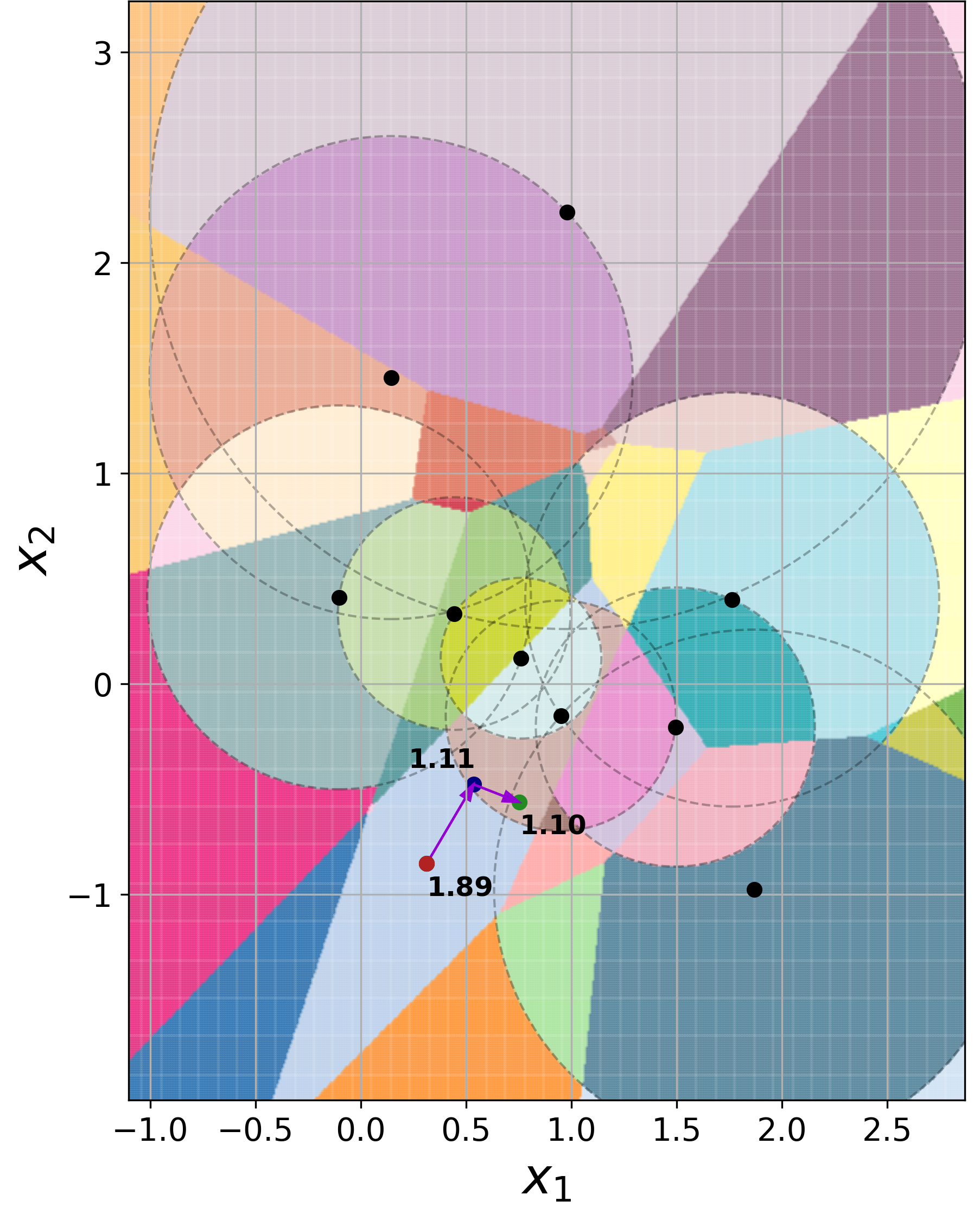}
            \caption{Visual example of DCFO's region-based space partitioning for a synthetic $2$D dataset. Each black point represents a data instance in $\mathcal{D}$, and the red point $\mathbf{p_i}$ is the outlier to be explained (threshold = $1.10$). Dashed circles denote $k$-distance neighbourhoods (with $k=2$). Background colours uniquely indicate the LOF regions. The blue point is the result of the first constrained optimisation, lying outside the initial region and still above the threshold ($\operatorname{LOF} = 1.11$). A second optimisation starting from that point produces the green counterfactual, which remains within a single region and satisfies the threshold ($\operatorname{LOF} = 1.10$). Arrows indicate the optimisation path.}
            \label{fig:spacePartition}
            \Description{}
        \end{figure}
    
        Figure~\ref{fig:spacePartition} illustrates the partitioning induced by keys $\mathcal{K}(\mathbf{x})$ on a $2$D synthetic dataset. The dataset $\mathcal{D}$ is sampled from a standard bivariate normal distribution. The red point $\mathbf{p_i}$ denotes the outlier for which a counterfactual is to be computed, while the black points represent other points in the dataset. The neighbourhood size is set to $k = 2$. The dashed circles illustrate the $k$-distance neighbourhood of each point $\mathbf{p_j} \in \mathcal{D}$. Each coloured region corresponds to a unique key $K$, i.e., all points in a region share the same nearest neighbours and second-order neighbours.

        Critically, within each region $R_K$, assuming a well-behaved distance function such as the one used for LOF, i.e., the Euclidean distance, the LOF score becomes a differentiable function of $\mathbf{x}$. More precisely, we denote $\operatorname{LOF}_K(\mathbf{x})$ as the LOF score computed using the fixed neighbourhood structure encoded by key $K$. This function is differentiable almost everywhere in $\mathcal{X}$, which allows us to perform efficient gradient-based optimisation. We formalise this property in proposition~\ref{prop:diff} below, the full proof is in Appendix~\ref{sec:proof}.

        \begin{proposition}\label{prop:diff}
            Let $d$ be a well-behaved distance function. Then, $\operatorname{LOF}_K(\mathbf{x})$ is continuous and differentiable almost everywhere. Specifically, if $d$ is twice continuously differentiable, then $\operatorname{LOF}_K(\mathbf{x})$ is twice continuously differentiable almost everywhere.
        \end{proposition}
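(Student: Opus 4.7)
The plan is to unfold $\operatorname{LOF}_K(\mathbf{x})$ using the fact that the key $K$ encodes a fixed neighbourhood structure, and thereby reduce the regularity question to a single elementary operation, namely the $\max$ inside the reachability distance. First I would exploit that, with $K$ fixed, both $knn(\mathbf{x})$ and $knn(\mathbf{p_j})$ for each $\mathbf{p_j}\in knn(\mathbf{x})$ are specified by $K$ and do not depend on $\mathbf{x}$. Consequently the quantities $k$-distance$(\mathbf{p_j})$ and $lrd_k(\mathbf{p_j})$ are pure constants, as they involve only dataset points, not $\mathbf{x}$. Substituting into Eq.~\ref{eq:LOF} and Eq.~\ref{eq:lrd} yields the closed form
\[
\operatorname{LOF}_K(\mathbf{x}) \;=\; \frac{1}{k^{2}}\Bigl(\sum_{\mathbf{p_j}\in knn(\mathbf{x})} lrd_k(\mathbf{p_j})\Bigr)\,\sum_{\mathbf{p_j}\in knn(\mathbf{x})} \max\!\bigl\{k\text{-distance}(\mathbf{p_j}),\, d(\mathbf{x},\mathbf{p_j})\bigr\},
\]
in which every ingredient except the distances $d(\mathbf{x},\mathbf{p_j})$ is constant in $\mathbf{x}$.

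Next I would analyse regularity term-by-term. Each map $\mathbf{x}\mapsto d(\mathbf{x},\mathbf{p_j})$ is continuous everywhere and, under the $C^{2}$ hypothesis on $d$, twice continuously differentiable. The only non-smooth ingredient is $\max\{a,f(\mathbf{x})\}$ with a constant $a$: on the open sets $\{f<a\}$ and $\{f>a\}$ it coincides with a constant or with $f$ respectively, so it is continuous everywhere and inherits the full regularity of $f$ off the level set $\{f=a\}$. The exceptional set for the whole expression is therefore the finite union of hypersurfaces $\{\mathbf{x} : d(\mathbf{x},\mathbf{p_j}) = k\text{-distance}(\mathbf{p_j})\}$, one per neighbour. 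For a well-behaved $d$, and in particular for the Euclidean distance actually used by DCFO, these are smooth $(n{-}1)$-dimensional submanifolds of $\mathbb{R}^{n}$, hence have Lebesgue measure zero.

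Finally, I would assemble the pieces. The constant prefactor $\sum lrd_k(\mathbf{p_j})$ is strictly positive and the $k$-distances are strictly positive, so the denominator arising in the $lrd$ ratio is bounded away from zero on the whole space and taking the reciprocal introduces no new singularities. A finite sum and a finite product of functions that are continuous everywhere and $C^{1}$ (respectively $C^{2}$) almost everywhere retain those properties, giving continuity on all of $\mathcal{X}$ together with the desired almost-everywhere smoothness.

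The main obstacle is a clean treatment of the $\max$: one must fix what ``almost everywhere'' means (Lebesgue measure on $\mathcal{X}$) and verify that, for the admissible class of distances, the level sets $\{d(\cdot,\mathbf{p_j}) = c\}$ really are negligible hypersurfaces rather than fat sets produced by some pathological $d$. I would isolate this as a short lemma on the almost-everywhere smoothness of $\max\{a,f\}$ when $f$ has no critical value at $a$, so that the implicit function theorem yields the submanifold structure; the Euclidean norm trivially satisfies this regular-value condition away from its base point, which is precisely the content of ``well-behaved'' for the present proposition.
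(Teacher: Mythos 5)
Your overall strategy --- unfolding $\operatorname{LOF}_K$ into a product of sums of reachability distances and reducing everything to the regularity of $\max\{a,\, d(\mathbf{x},\mathbf{p_j})\}$ away from a negligible level set --- is essentially the same route the paper takes, and your handling of the measure-zero claim (isolating a lemma and invoking the regular-value/implicit-function argument, plus noting that the reciprocal is harmless because the denominator is bounded away from zero) is more careful than the paper's own sketch. However, there is one genuine gap: you assert that $k\text{-distance}(\mathbf{p_j})$ and $lrd_k(\mathbf{p_j})$ are ``pure constants, as they involve only dataset points, not $\mathbf{x}$.'' In the DCFO setting this is false in general. The candidate $\mathbf{x}$ is treated as a new data point, so it can itself belong to $knn(\mathbf{p_j})$ for a neighbour $\mathbf{p_j}$ of $\mathbf{x}$; this is precisely why the key $\mathcal{K}(\mathbf{x})$ must record the neighbours-of-neighbours and why the induced partition is strictly finer than the order-$k$ Voronoi diagram. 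The paper's proof explicitly flags this case (``in general $lrd_k(\mathbf{y}_1)$ does depend on $\mathbf{x}$, as $\mathbf{x}$ could be one of the neighbours of $\mathbf{y}_1$''). When it occurs, $lrd_k(\mathbf{p_j})$ contains the term $rd_k(\mathbf{p_j},\mathbf{x})=\max\{k\text{-distance}(\mathbf{x}),\, d(\mathbf{p_j},\mathbf{x})\}$, and possibly $k\text{-distance}(\mathbf{p_j})=d(\mathbf{p_j},\mathbf{x})$ when $\mathbf{x}$ is the $k$-th neighbour of $\mathbf{p_j}$; both vary with $\mathbf{x}$, so your closed form is incorrect in that case and the prefactor $\sum_j lrd_k(\mathbf{p_j})$ cannot be pulled out as a constant.

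The gap is repairable with machinery you already have. Within $R_K$ the identity of every relevant neighbour is fixed, so $k\text{-distance}(\mathbf{x})$ equals $d(\mathbf{x},\mathbf{p}_{(k)})$ for a fixed dataset point and is therefore $C^2$; each additional reachability term is again a $\max$ of two $C^2$ functions of $\mathbf{x}$, nonsmooth only on a finite union of level sets of the same type; and each $lrd_k(\mathbf{p_j})$ is the reciprocal of a strictly positive sum of such terms. You would need to extend your lemma from $\max\{a,f\}$ with $a$ constant to $\max\{f,g\}$ with both arguments varying (the set $\{f=g\}$ is still negligible under your regular-value condition), after which the conclusion follows as you argue. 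As written, though, your proof covers only the special case in which $\mathbf{x}$ is not a neighbour of any of its own neighbours.
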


        The algorithmic definition of $\operatorname{DCFO}$ is as follows. The method   takes as inputs: \textbf{(i)} the outlier $\mathbf{p}_i$ (i.e., a point in $\mathcal{D}$ with $\operatorname{LOF}(\mathbf{p}_i) > t$), \textbf{(ii)} the LOF parameter $k$, \textbf{(iii)} the threshold $t$ used to classify outliers, and \textbf{(iv)} the distance function $d$.

    The first step of the algorithm is to compute the key $K = \mathcal{K}  (\mathbf{p}_i)$, which defines the region $R_K$. We then solve the constrained optimisation problem in Eq.~\ref{eq:optimisationProblem}, replacing the original constraint $\operatorname{LOF}(\mathbf{x}) \le t$ with the relaxed, region-specific constraint $\operatorname{LOF}_K(\mathbf{x}) \le t$.

    Because $\operatorname{LOF}_K(\mathbf{x})$ is differentiable almost everywhere (Proposition~\ref{prop:diff}), we apply a gradient-based optimisation method. Specifically, we use Sequential Least Squares Programming (SLSQP)~\cite{kraft1988software, nocedal1999numerical}, an efficient constrained optimiser that handles inequality constraints.
    
    During the optimisation, if SLSQP explores a point $\mathbf{x}$ that lies outside the original region, we store it in a First-In-First-Out (FIFO) exploration queue for later evaluation. Once SLSQP converges to a solution $\mathbf{x}_f$, we check whether the final point remains within the original region. We compute the key $K' = \mathcal{K}(\mathbf{x}_f)$ and distinguish two cases. \emph{Case one:} If $K' = K$, then $\operatorname{LOF}_K(\mathbf{x}_f) = \operatorname{LOF}(\mathbf{x}_f)$, so the constraint is valid, and we return $\mathbf{x}_f$ as the counterfactual $\mathit{cf}(\mathbf{p}_i)$. \emph{Case two:} If $K' \neq K$, the constraint $\operatorname{LOF}_K(\mathbf{x}_f) \le t$ no longer guarantees that $\operatorname{LOF}(\mathbf{x}_f) \le t$, as the neighbourhood has changed. In this case, we recursively restart the algorithm using $\mathbf{x}_f$ as the new starting point, now operating in region $R_{K'}$.

    To avoid redundant computation, if $K'$ has already been the start of an optimisation, we dequeue the next unexplored point and attempt optimisation from there. The recursion terminates in two scenarios, \textbf{(i)} the optimiser finds a valid counterfactual in the current region,
    \textbf{(ii)} all candidates in the exploration queue have been exhausted without success. The step by step algorithm is outlined in Algorithm~\ref{alg:dcfo}.

    \begin{algorithm}[tb]
        \SetAlgoLined
        \Input{Outlier $\mathbf{p}_i$, distance $d$, $\operatorname{LOF}$ parameter $k$, dataset $\mathcal{D}$, threshold $t$.}
        \Output{Counterfactual, $\mathit{cf}(\mathbf{p}_i)$}
        \vspace{1mm}
        Initialise empty starting regions' queue, StartList $\leftarrow \left [ \ \right ]$\;
        \vspace{1mm}
        Initialise empty exploration queue, ExpList $\leftarrow \operatorname{FIFO} \left [ \ \right ]$ \;
        \vspace{1mm}
        Compute key $K=\mathcal{K}(\mathbf{p}_i)$ \;
        \vspace{1mm}
        StartList.append($K$) \;
        \vspace{1mm}
        Find $\mathbf{x}_f$ solving $\min\limits_{\mathbf{x}\in \mathcal{X}} d(\mathbf{x}, \mathbf{p}_i) \ \big | \ \operatorname{LOF_K}(\mathbf{x}) \leq t$, using SLSQP\;
        \vspace{1mm}
        Compute key $K'=\mathcal{K}(\mathbf{x}_f)$ \;
        \vspace{1mm}
        \If{$K' = K$}{
            \Return{$\mathbf{x}_f$} \;
            }
        \ElseIf{$K' \neq K$}
        {
        \If{$K' \notin \operatorname{StartList}$}{
        StartList.append($K'$) \;
        \Return{
        $\operatorname{DCFO}(\mathbf{x}_f)$
        \;}
        }
        \ElseIf{$K' \in $ StartList}{
        $\mathbf{x_i} \leftarrow \operatorname{ExpList}. \operatorname{pop}()$ \;
        \Return{
        $\operatorname{DCFO}(\mathbf{x}_i)$
        \;}
        }
    }
    \caption{$\operatorname{DCFO}(\mathbf{p}_i, d, k, \mathcal{D}, t)$}
    \label{alg:dcfo}
    \end{algorithm}

    Thus, DCFO exploits the structure of the $\operatorname{LOF}$ function to generate high-quality counterfactuals using efficient, gradient-based optimisation. Algorithm~\ref{alg:dcfo} outlines how to find the counterfactual closest to the original outlier in terms of distance. A modified version of this algorithm, capable of producing multiple diverse counterfactuals, is described in Section~\ref{sec:multipleCounterfactuals}.

    An illustrative example of DCFO’s optimisation process is shown in Figure~\ref{fig:spacePartition}. The threshold for outlierness is set to $t=1.10$, and the red point represents the outlier $\mathbf{p}_i$ with an initial LOF score of $1.89$.

    The purple arrows show the optimisation steps of DCFO. The first solution (blue point) lies in a different region than the starting one, seen by the different background colours, but still violates the threshold ($\operatorname{LOF} = 1.11$). A second optimisation run (starting from that point) produces the final counterfactual (green point), which lies within the same region, and now satisfies the constraint ($\operatorname{LOF} = 1.10 = t$). This example demonstrates that DCFO does not require computing the full space partition in advance; instead, it only evaluates a small number of points.

    \subsection{Proposing Multiple Counterfactuals}\label{sec:multipleCounterfactuals}
        The procedure described in Algorithm~\ref{alg:dcfo} returns a single counterfactual. However, in many decision-making contexts, stakeholders may wish to consider multiple alternatives. Actionability depends on constraints or preferences varying across users, so offering a diverse set of counterfactuals improves the likelihood of identifying a feasible and meaningful proposal. 
    
        Moreover, simply generating many counterfactuals close to one another does not provide additional explanatory value. If counterfactuals are only infinitesimally different from one another, they are effectively indistinguishable in practice~\cite{ley2022diverse}. Prior work in explainability, similarly emphasises the importance of diversity~\cite{mothilal2020explaining, russell2019efficient}. 
        
        To generate multiple, diverse, and still proximal counterfactuals, we make use of the region-based partition of the input space. After identifying the optimal counterfactual using Algorithm~\ref{alg:dcfo}, we revisit the regions explored during the optimisation process. For each region not yet associated with an accepted counterfactual, we initiate a new constrained optimisation. Each new counterfactual is accepted if it belongs to a region that has not yet proposed any counterfactual, ensuring diversity by construction.
        
        The final result is a set of counterfactuals that are \emph{diverse}, since each counterfactual is based on a distinct key $\mathcal{K}(\mathbf{x})$, and \emph{proximal}, since each counterfactual is the result of a distance optimisation.

        If a user requests a fixed number $n$ of counterfactuals, the process stops once $n$ valid counterfactuals are found or the region queue is exhausted. As new regions are encountered during these secondary optimisations, they are added to the exploration list and considered if further counterfactuals are requested. The complete procedure is shown in the Appendix~\ref{sec:multiple_algo}, Algorithm~\ref{alg:dcfoMultiple}.

    \subsection{Non-Actionable Features}\label{subsec:nonActionable}
    In many real-world datasets, certain features are immutable or extremely difficult to change, such as ages. These are referred to as \emph{non-actionable features}~\cite{guidotti2024counterfactual, bhattacharya2025show, verma2024counterfactual}. If a counterfactual proposes changes to such features, it becomes infeasible for users to act upon, thereby invalidating its purpose. For instance, an outlier cannot be corrected if the suggested fix involves changing attributes that the user cannot modify.

    To address this, DCFO allows users to specify which features are actionable and which are not. The algorithm is modified accordingly: while distances and neighbourhoods are still computed in the whole feature space, the SLSQP optimiser is restricted to adjust only the actionable variables. The restriction ensures that non-actionable features remain fixed at their original values, and the counterfactual modifies only the permitted subset to maximise proximity.

    \subsection{Plausibility}\label{subsec:plausibility}
    Plausibility is a critical quality for counterfactuals, as proposed instances should be realistic and attainable. A common criticism of many counterfactual methods is that they produce solutions outside the empirical data distribution~\cite{zhou2025eace, kanamori2020dace, verma2020counterfactual}, making them likely infeasible or non-actionable. Specifically, if the counterfactual lies in a region of the feature space where no other instances are, it might imply that the region is challenging to reach, and the counterfactual may not be feasible.

    Fortunately, plausibility integrates naturally into DCFO, particularly because it operates on top of LOF. A widely accepted proxy for plausibility is data density: points that lie in high-density regions are more likely to be realistic and interpretable~\cite{kanamori2020dace}. Since LOF scores are inversely related to relative local density, high LOF values indicate low-density (i.e., isolated) regions, while low LOF values indicate high-density (i.e., typical) areas. Targeting a lower LOF score during optimisation naturally encourages plausibility. In other words, a point with a low LOF score lies in a region of the space with many similar instances. These regions are more likely to reflect realistic and feasible scenarios, as they represent common structures in the data. Therefore, by setting a stricter constraint on the LOF score (e.g., requiring $\operatorname{LOF}(\mathbf{x}) \le 1.25$ when the threshold is $t = 1.5$), DCFO can be steered toward generating counterfactuals that lie in denser, and hence more plausible areas of the feature space. In this way, DCFO not only provides proximity and diversity, but also generates counterfactuals that are grounded in realistic, high-density regions.
\section{Experiments}\label{sec:experiments}
    In this section, we empirically evaluate the quality of DCFO's counterfactual explanations using metrics from the counterfactual explanation literature~\cite{guidotti2024counterfactual}: proximity (closeness between point and explanation), validity (the proportion of valid counterfactuals), and diversity (how different the proposed counterfactuals are). In addition, we assess the robustness of DCFO when handling non-actionable features. Throughout all experiments, distances are computed using Euclidean distance, consistent with LOF's default implementation. Appendix~\ref{sec:runtime} details a full time complexity analysis.

    The experiments compare DCFO with three state-of-the-art methods. \textbf{(i)} Baycon, a model-agnostic counterfactual method that leverages Bayesian optimisation, handling non-differentiable models~\cite{romashov2022baycon}. \textbf{(ii)} EACE, a model-agnostic method explicitly designed for explaining outliers by generating counterfactuals close to the boundary separating outliers from inliers~\cite{zhou2025eace}. \textbf{(iii)} Baseline, a na\"ive approach that moves an outlier directly to the position of the nearest inlier.

    We considered other well-known model-agnostic methods such as DiCE~\cite{mothilal2020explaining} and FACE~\cite{poyiadzi2020face}. However, these methods failed to produce valid counterfactuals due to the complexity arising from LOF's non-continuos nature and outlier-specific decision boundaries (see Appendix~\ref{sec:dice&face} for details).

    Experiments are conducted using $50$ diverse datasets from the OpenML repository~\cite{vanschoren2014openml}, covering different domains, number of instances in the range $16:53940$, and dataset dimensionality in the range $2:124$~\footnote{Reproducible code for all experiments is available at our repository: https://anonymous.4open.science/r/DCFO-E37E/}. 

    \subsection{Quality of DCFO's Counterfactuals}\label{subsec:bestCounterfactual}
        We measure counterfactual quality in terms of proximity and validity. LOF is computed using Euclidean distance, and we randomly select the neighbourhood parameter ($k$) from the set: $\{10, 15, 20\}$, to demonstrate DCFO's robustness to LOF's parameter choice. 

        The LOF threshold, which separates inliers from outliers, is initially set to $1.5$, following standard practice~\cite{breunig2000lof}. If no points exceed this threshold, we set the threshold at the $95$th percentile of LOF scores, ensuring about $5\%$ of points are classified as outliers. 
        
        For each identified outlier, we run DCFO, Baycon, EACE and the Baseline method. LOF acts as a binary classifier, labelling points as outliers if their LOF scores exceeds the threshold. DCFO generates valid counterfactuals by ensuring the new LOF score is below this threshold. Baycon by default generates multiple counterfactual candidates.

        \paragraph{Validity} Counterfactuals are valid if the generated points are not identified as outliers. Thus, we report the proportion of outliers for which a counterfactual is identified for each dataset. A ´validity of $1$ means that the method can find a counterfactual for every outlier, while a validity of $0$ indicates that the method did not find counterfactuals.

        The results are summarised in Table~\ref{tab:validity}. Due to space constraints, we present $10$ of the $50$ datasets utilised in our analysis. The complete table can be found in Appendix~\ref{subsec:fullValidity}. Additionally, we report the validity for all $50$ datasets, $1.00$ for DCFO, $0.84$ for Baycon, $0.38$ for EACE, and $0.88$ for Baseline. 

        \begin{table}[tbp]
        \caption{Validity across all outliers for $10$ datasets. A score of $1$ indicates a valid counterfactual was found for every outlier. We do not report the standard error of the mean as we compute validity as a per-dataset and not per-instance metric.}
            \label{tab:validity}
            \centering
            \begin{tabular}{lrllll}
            \toprule
             & k & DCFO & Baycon & EACE & Baseline \\
            \midrule
            sonar & 10 & \textbf{1.00} & \textbf{1.00} & \textbf{1.00} & \textbf{1.00} \\
            liver-disorders & 10 & \textbf{1.00} & \textbf{1.00} & 0.56 & \textbf{1.00} \\
            diabetes & 10 & \textbf{1.00} & \textbf{1.00} & 0.33 & \textbf{1.00} \\
            triazines & 15 & \textbf{1.00} & \textbf{1.00} & 0.00 & 0.90 \\
            baskball & 15 & \textbf{1.00} & \textbf{1.00} & 0.00 & \textbf{1.00} \\
            glass & 15 & \textbf{1.00} & 0.56 & 0.39 & 0.75 \\
            ar3 & 15 & \textbf{1.00} & 0.73 & 0.09 & \textbf{1.00} \\
            disclosure x noise & 15 & \textbf{1.00} & \textbf{1.00} & 0.50 & \textbf{1.00} \\
            ionosphere & 10 & \textbf{1.00} & 0.15 & 0.13 & 0.41 \\
            chscase census2 & 10 & \textbf{1.00} & \textbf{1.00} & 0.50 & \textbf{1.00} \\
            \bottomrule
            \end{tabular}
        \end{table}

    Validity shows that DCFO is able to provide counterfactual explanations for every outlier detected, having validity $1$. On the contrary, other methods have lower validity, with EACE the weakest overall. 

    Through exploiting LOF's inner-workings, DCFO is able to propose a counterfactual for every outlier present in all the $50$ datasets studied. Due to the complexity introduced by an outlier detection task and LOF's non-continuous landscape, other methods fail instead to always provide counterfactuals, coming short of explaining all anomalous events. Note that Baseline's counterfactuals can also be invalid, as elaborated in Appendix~\ref{sec:baseline}.

    \paragraph{Proximity}
        Proximity denotes how similar the counterfactual is to the original data point, this is measured as the Euclidean distance between them. Thus, we select the closest valid counterfactual, i.e., a point with an LOF score below the threshold; if none are valid, Baycon's proposal is marked as invalid. Equivalently, if EACE does not provide any valid candidate, the validity for EACE is zero. The baseline directly moves the outlier to the nearest inlier, but does not guarantee validity due to LOF's relative density calculations, as elaborated in Appendix~\ref{sec:baseline}.

        Table~\ref{tab:proximity} reports proximity results for $10$ datasets. The average distance of the counterfactual with the lowest proximity for each outlier is highlighted in bold. If a method does not find any valid counterfactual for a given dataset, it's marked with Not Available (NA). Moreover, for each dataset we display LOF's $k$ parameter. The table containing all $50$ datasets can be found in Appendix~\ref{subsec:fullProximity}.
        
        As a result, we see in Table~\ref{tab:proximity} that DCFO consistently achieves superior proximity scores compared to other methods, demonstrating enhanced actionability and feasibility.

   \begin{table*}[t]
    \caption{Proximity results on $10$ representative datasets (from a total of $50$). For each dataset, we report the mean Euclidean distance between the original outliers and their closest valid counterfactuals along with the standard error of the mean. NA in the value for the standard error of the mean implies that the dataset has only one counterfactual, making the standard deviation impossible to computes. Lower values indicate better proximity and NA indicates no valid counterfactual found. LOF neighbourhood size is shown as $k$.}
    \label{tab:proximity}
    \centering
    \begin{tabular}{l r
                    r@{\,±\,}l
                    r@{\,±\,}l
                    r@{\,±\,}l
                    r@{\,±\,}l}
    \toprule
    Dataset & k
      & \multicolumn{2}{c}{DCFO}
      & \multicolumn{2}{c}{Baycon}
      & \multicolumn{2}{c}{EACE}
      & \multicolumn{2}{c}{Baseline} \\
    \cmidrule(lr){3-4}
    \cmidrule(lr){5-6}
    \cmidrule(lr){7-8}
    \cmidrule(lr){9-10}
    & 
      & mean & sem
      & mean & sem
      & mean & sem
      & mean & sem \\
    \midrule
    sonar & 10
      & $\mathbf{0.62}$ & \text{NA}
      & 1.59            & \text{NA}
      & 20.02           & \text{NA}
      & 9.14            & \text{NA} \\

    liver-disorders & 10
      & $\mathbf{0.15}$ & 0.04
      & 0.22            & 0.05
      & 1.3             & 0.2
      & 1.06            & 0.07 \\

    diabetes & 10
      & $\mathbf{0.29}$ & 0.05
      & 0.47            & 0.07
      & 1.7             & 0.2
      & 2.2             & 0.1 \\

    triazines & 15
      & $\mathbf{4.2}$  & 0.6
      & 7.9             & 0.7
      & \multicolumn{2}{c}{\text{NA}}
      & 10.7            & 0.7 \\

    baskball & 15
      & $\mathbf{1.16}$ & \text{NA}
      & 2.29            & \text{NA}
      & \multicolumn{2}{c}{\text{NA}}
      & 1.76            & \text{NA} \\

    glass & 15
      & $\mathbf{0.4}$  & 0.2
      & 0.7             & 0.4
      & 2.5             & 0.5
      & 1.4             & 0.4 \\

    ar3 & 15
      & $\mathbf{0.62}$ & \text{NA}
      & 0.74            & \text{NA}
      & 3.42            & \text{NA}
      & 2.33            & \text{NA} \\

    disclosure x noise & 15
      & $\mathbf{0.07}$ & 0.02
      & 0.15            & 0.03
      & 0.8             & 0.1
      & 0.81            & 0.05 \\

    ionosphere & 10
      & $\mathbf{0.2}$  & 0.1
      & 0.4             & 0.1
      & 6               & 1
      & 2.5             & 0.2 \\

    chscase census2 & 10
      & $\mathbf{0.30}$ & 0.05
      & 0.59            & 0.09
      & 1.9             & 0.2
      & 2.27            & 0.09 \\
    \bottomrule
    \end{tabular}
\end{table*}

        To facilitate comparison across datasets with varying dimensionalities, we utilise a Critical Difference (CD) diagram~\cite{demsar2006statistical}, seen in Figure~\ref{fig:cdPlot}. The CD diagram uses a Nemenyi post-hoc test to determine whether observed differences in average ranks are statistically significant.
        
        In our evaluation across all $50$ datasets, the methods are ranked according to their proximity performance, with rank $1$ assigned to the method that, on average, proposes the closest valid counterfactual for a given dataset, and rank $4$ to the one with the furthest. 
        
        \begin{figure}[tbp]
            \centering
            \includegraphics[width=1\linewidth]{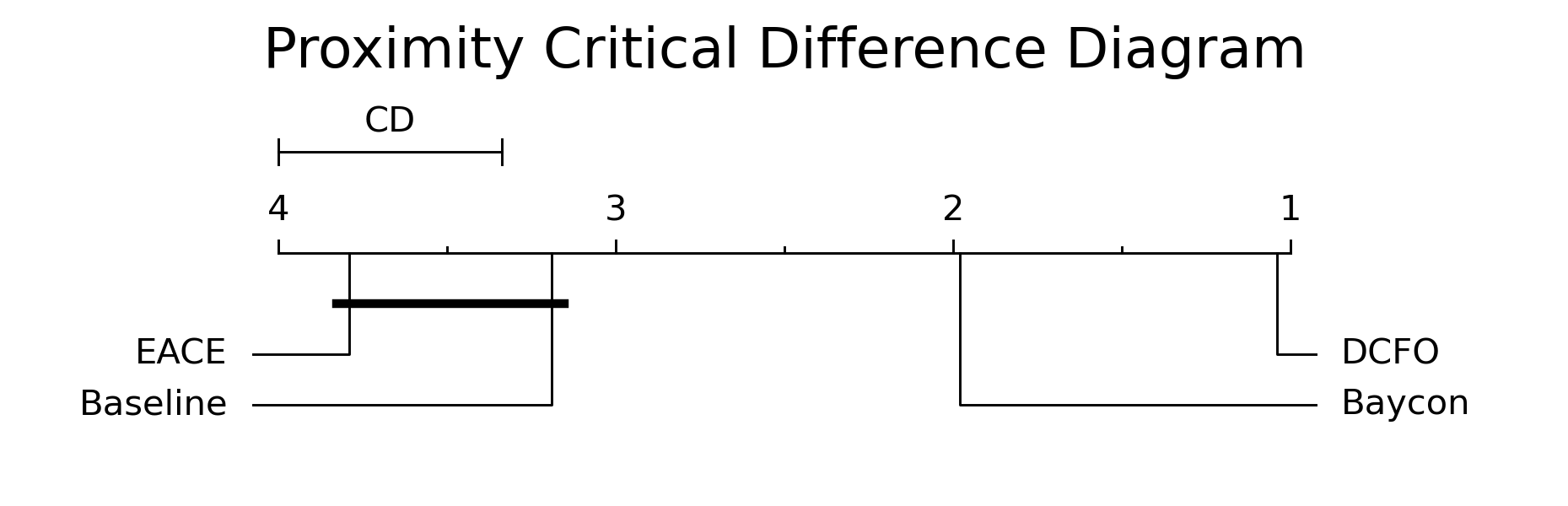}
            \caption{Proximity comparison across methods using a Critical Difference diagram. DCFO ranks first with statistically significant results (Nemenyi’s test). Baycon is second, while Baseline and EACE follow, though their differences are not significant.}
            \label{fig:cdPlot} 
            \Description{}
        \end{figure}
        
        In Figure~\ref{fig:cdPlot} the horizontal bar in the diagram denotes the minimum difference required for significance at $p < 0.05$. Methods whose average ranks are not significantly different are connected by a thick horizontal line. DCFO achieves the best average rank and, according to Nemenyi's test, the difference to the second-best method, Baycon, is statistically significant. Baycon ranks second, followed by the proposed Baseline and EACE. The difference between the Baseline and EACE is not statistically significant.
        
    \subsection{Diversity of Counterfactuals}\label{subsec:expMultple}
    In practical scenarios, providing multiple actionable alternatives is often desirable. To evaluate DCFO's capability to generate multiple diverse counterfactuals, we use the diversity metric from Mothilal et al.~\cite{mothilal2020explaining}. Specifically, we compute a kernel matrix $K_{ij} = 1/(1+d(\mathit{cf}_i, \mathit{cf}_j))$ and use its determinant as a measure of diversity: higher determinant values indicate greater diversity among proposed counterfactuals. For consistency, $10$ counterfactuals per instance are taken for each method, following their respective generative mechanisms. 

    We benchmark DCFO against Baycon and EACE, selecting for EACE the first 10 valid counterfactuals it provides. Baycon, lacking direct control over the number of counterfactuals proposed, is assessed in two ways: using the $10$ closest counterfactuals and a random selection of $10$ counterfactuals from its proposals. DCFO inherently prioritises proximity in its diversity generation, maintaining actionable solutions close to the original instance.

    To provide an interpretable assessment of diversity, we rank methods based on their determinant scores and visualise using the Critical Difference diagram (Figure~\ref{fig:cdPlotDiversity}). A score of zero denotes when a method fails to produce multiple counterfactuals, thus, ranking last. In Figure~\ref{fig:cdPlotDiversity}, we clearly see DCFO's statistically significant advantage in diversity compared to the other methods. Randomly selected counterfactuals from Baycon and EACE exhibit similar diversity, while Baycon’s $10$ closest solutions offer the least diversity.

    \begin{figure}[tbp]
        \centering
        \includegraphics[width=1\linewidth]{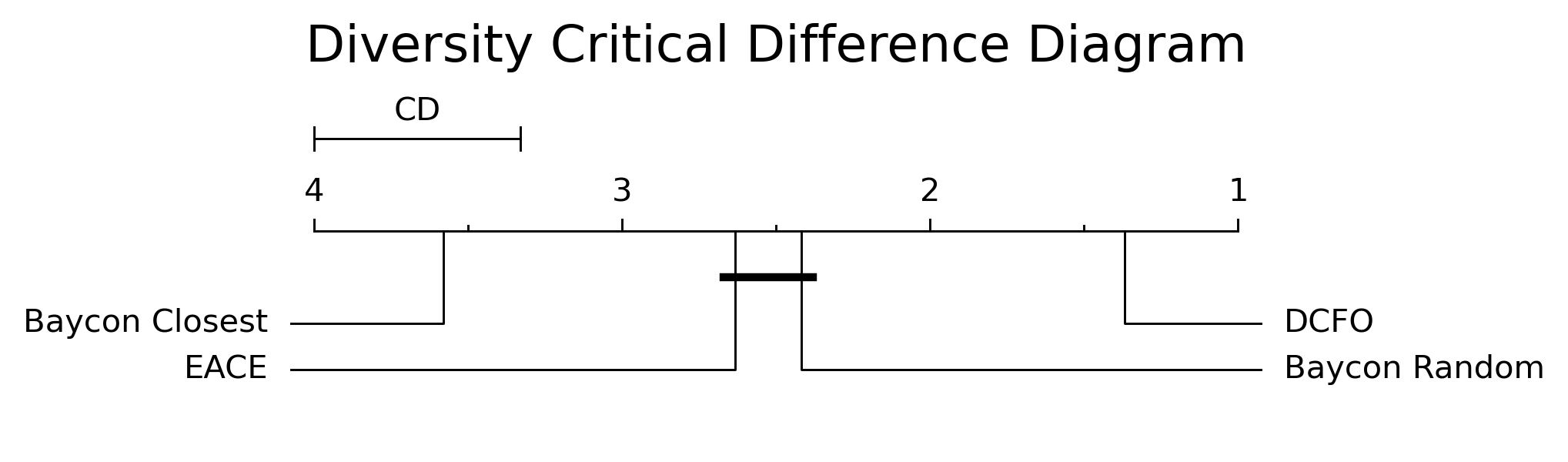}
        \caption{Critical Difference diagram comparing diversity among counterfactual generation methods. DCFO ranks highest with statistically significant differences.}
        \label{fig:cdPlotDiversity} 
        \Description{}
    \end{figure}

    \subsection{Impact of Non-Actionable Features on Counterfactual Generation}\label{subsec:expNonActionable}
    Real-world applications often limit the actionability of certain features. We evaluate DCFO's effectiveness when subsets of features are deemed non-actionable, measuring both proximity and validity across the selected datasets. Specifically, we randomly define a subset of non-actionable features for all $50$ datasets, ensuring they never exceed half the total features.

    Then, a further random process chooses which features cannot be altered. Once the set of non-actionable features is chosen, for each outlier, a counterfactual is proposed with both DCFO and Baycon: the two methods that can account for non-actionable features. Quality is assessed through closeness to the original instance (proximity) and proportion of valid counterfactuals (validity).

    Results for $10$ randomly chosen datasets are provided in Tables~\ref{tab:nonActionableProximity} and~\ref{tab:nanActionableValidity}, the column \%NAF reports the percentage of non-actionable features for the given dataset. For validity, we also report the average across al datasets which is $0.87$ for DCFO and $0.73$ for Baycon.
    
    The resulting tables demonstrate DCFO’s robustness, consistently outperforming Baycon in terms of proximity and validity; although some cases present increased complexity that reduces validity. This reduction highlights the challenge posed by feature constraints, underscoring DCFO’s superior capability in handling non-actionable feature scenarios compared to existing methods. The full table for proximity of all $50$ datasets is in Appendix~\ref{subsec:nonActFullProximity}, showing that DCFO outperforms Baycon in all but two datasets. The full validity table is in Appendix~\ref{subsec:nonActFullValidity}. 

\begin{table}[t]
    \caption{Proximity results for DCFO and Baycon when limiting the number of actionable features. $\%$NAF denotes the percentage of non-actionable features. We report the mean and its standard error across counterfactuals. NA as the standard error of the mean implies that only one counterfactual is present in the datset.}
    \label{tab:nonActionableProximity}
    \centering
    \begin{tabular}{l r r r@{\,±\,}l r@{\,±\,}l}
    \toprule
    Dataset & \%NAF & k
      & \multicolumn{2}{c}{DCFO}
      & \multicolumn{2}{c}{Baycon} \\
    \cmidrule(lr){4-5}
    \cmidrule(lr){6-7}
      &      &    
      & mean & sem
      & mean & sem \\
    \midrule
    confidence       & 0.33 & 20 & $\mathbf{0.2}$  & 0.1 & 0.2  & 0.1 \\
    wine             & 0.08 & 10 & $\mathbf{0.7}$  & 0.2 & 1.1  & 0.3 \\
    wine-quality-white & 0.27 & 20 & $\mathbf{1.1}$  & 0.2 & 1.6  & 0.2 \\
    wdbc             & 0.13 & 20 & $\mathbf{2.1}$  & 0.4 & 3.2  & 0.7 \\
    diggle table a1  & 0.25 & 15 & $\mathbf{0.14}$ & 0.01 & 0.2 & 0.1 \\
    libras move      & 0.28 & 20 & $\mathbf{0.61}$ & \text{NA} & 1.68 & \text{NA} \\
    kc3              & 0.26 & 15 & $\mathbf{0.8}$  & 0.2 & 1.2  & 0.3 \\
    kc2              & 0.14 & 10 & $\mathbf{0.8}$  & 0.2 & 1.3  & 0.3 \\
    mfeat-zernike    & 0.26 & 10 & $\mathbf{1.7}$  & 0.6 & 3.0  & 1.0 \\
    ecoli            & 0.14 & 15 & $\mathbf{2.1}$  & 0.5 & 2.4  & 0.5 \\
    \bottomrule
    \end{tabular}
\end{table}


    \begin{table}[t]
        \caption{Validity results for DCFO and Baycon when limiting the number of actionable features. $ \% NAF$ denoting the percentage of non-actionable features. The standard error of the mean is not reported as we compute validity as a per-dataset and not a per-instance metric.}
        \label{tab:nanActionableValidity}
\centering
        \begin{tabular}{lrrll}
        \toprule
         & \%NAF & k & DCFO & Baycon \\
        \midrule
        confidence & 0.33 & 20 & \textbf{1.00} & \textbf{1.00} \\
        wine & 0.08 & 10 & \textbf{1.00} & \textbf{1.00} \\
        wine-quality-white & 0.27 & 20 & \textbf{0.95} & 0.90 \\
        wdbc & 0.13 & 20 & \textbf{1.00} & 0.94 \\
        diggle table a1 & 0.25 & 15 & \textbf{1.00} & \textbf{1.00} \\
        libras move & 0.28 & 20 & \textbf{1.00} & \textbf{1.00} \\
        kc3 & 0.26 & 15 & \textbf{0.68} & 0.40 \\
        kc2 & 0.14 & 10 & \textbf{0.78} & 0.65 \\
        mfeat-zernike & 0.26 & 10 & \textbf{1.00} & \textbf{1.00} \\
        ecoli & 0.14 & 15 & \textbf{0.96} & 0.92 \\
        \bottomrule
        \end{tabular}
    \end{table}
\section{Conclusion and Future Work}\label{sec:conclusion}
Outlier detection is vital in many domains, yet explanations for why instances are flagged as outliers are often overlooked. Such explanations are essential to validate model outputs, understand their reasoning, and guide appropriate action.

We introduced DCFO, a counterfactual method designed to explain predictions made by LOF, a widely used density-based outlier detector. By leveraging LOF's internal structure, DCFO generates high-quality counterfactuals that correct outliers, account for non-actionable features, ensure plausibility, and offer multiple proposals. It does so by partitioning the input space into regions where LOF behaves more smoothly.


Through extensive experiments, we benchmarked DCFO on real-world datasets against competing methods. We evaluated counterfactual quality using proximity (closeness to the original point), validity (how often outputs are correct), and diversity (ability to propose multiple alternatives). Finally, we used proximity and validity again to assess DCFO’s handling of non-actionable features.

Future work can investigate extending DCFO to other variants of LOF~\cite{alghushairy2020review, kriegel2009loop,schubert14local}. Furthermore, DCFO could be extended with modifications that aim to minimise the number of features changed. 

\clearpage
\newpage
\bibliographystyle{ACM-Reference-Format}
\bibliography{ref}


\appendix
\appendixpage

\section{Multiple Counterfactual Algorithm}\label{sec:multiple_algo}
We present here how DCFO's algorithm is modified in order to propose multiple counterfactuals. After finding the optimal counterfactual, further optimisations are run from the points contained in the exploration list produced in the first search. New proposals are accepted if they belong to regions not yet having any counterfactual, ensuring diversity. Note that when looking for new counterfactuals, the exploration list initialised with the first optimisation is extended. In addition to Algorithm~\ref{alg:dcfo} inputs, the procedure to propose multiple counterfactuals, outlined in Algorithm~\ref{alg:dcfoMultiple}, accepts also the number of counterfactuals to propose in input.

 \begin{algorithm}[]
        \SetAlgoLined
        \Input{Outlier $\mathbf{p}_i$, distance function $d$, $\operatorname{LOF}$ parameter $k$, dataset $\mathcal{D}$, threshold $t$,\\ number of counterfactuals $n$.}
        \Output{Counterfactual List, $\operatorname{cfList}(\mathbf{p}_i)$}
        \vspace{1mm}
        Initialise counterfactual list, cfList $\leftarrow  \left [ \ \right ]$ \;
        \vspace{1mm}
        Initialise counterfactuals' regions list, regionsList $\leftarrow  \left [ \ \right ]$ \;
        \vspace{1mm}
        Let $cf(\mathbf{p}_i) \leftarrow \operatorname{DCFO}(\mathbf{p}_i, d, k, \mathcal{D}, y)$ \;
        \vspace{1mm}
        Let ExpList the exploration list from $\operatorname{DCFO}(\mathbf{p}_i, d, k, \mathcal{D}, y)$ \;
        \vspace{1mm}
        $\operatorname{cfList}.\operatorname{append}(cf(\mathbf{p}_i))$ \;
        \vspace{1mm}
        $\operatorname{regionsList}.\operatorname{append}(\mathcal{K}(cf(\mathbf{p}_i)))$ \;
        \vspace{1mm}
        \While{$\operatorname{length}(\operatorname{cfList}) < n$ and $\operatorname{ExpList} > 0$}{
        $\mathbf{x}_i \leftarrow \operatorname{ExpList}.\operatorname{pop}()$ \;
        $cf(\mathbf{x}_i) \leftarrow \operatorname{DCFO}(\mathbf{x}_i, d, k, \mathcal{D}, y)$ \;
        \If{
        $\mathcal{K}(cf(\mathbf{x})_i)\text{ not in }\operatorname{regionsList}$
        }{
        $\operatorname{regionsList}.\operatorname{append}(\mathcal{K}(cf(\mathbf{x})_i))$\;
        $\operatorname{cfList}.\operatorname{append}(cf(\mathbf{x})_i)$
        
        }
        }
        \caption{$\operatorname{DCFOmultiple}(\mathbf{p}_i, d, k, \mathcal{D}, t, n)$}
        \label{alg:dcfoMultiple}
    \end{algorithm}

\section{LOF's Differentiability}\label{sec:proof}


DCFO uses constraint gradient-based optimization to find counterfactuals. Thus, the function to be optimized and the constraint need to be differentiable. Specifically, SLSQP is an ideal solver for objective functions and constraints that are twice continuously differentiable. We assume that our objective function, i.e., the distance metric, is twice continuously differentiable. In particular, respecting common practice for LOF, we employ the Euclidean distance. Therefore, to use SLSQP, we are left to show that the constraint function, i.e., $\operatorname{LOF}_K(x)$, is twice continuously differentiable. In the following, the proposition already introduced in Section~\ref{subsec:dcfo} is proved.

\begin{proposition}
        Given a well-behaved distance function $d$, $\operatorname{LOF}_K(\mathbf{x})$ is continuous and differentiable almost everywhere in $\mathcal{X}$. Specifically, if $d$ is twice continuously differentiable, then $\operatorname{LOF}_K(\mathbf{x})$ is twice continuously differentiable almost everywhere.
    \end{proposition}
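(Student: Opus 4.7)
The plan is to exploit the fact that fixing the key $K$ freezes the entire combinatorial structure entering the LOF formula, so that within the region $R_K$ the function $\operatorname{LOF}_K(\mathbf{x})$ becomes an explicit algebraic expression in $\mathbf{x}$ built from the distance function $d$ and finitely many constants. I would then locate the small, measure-zero ``kink set'' where smoothness can fail, and verify that elsewhere the expression inherits twice continuous differentiability from $d$.

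First I would unpack the definitions inside $R_K$. The key $K$ encodes $knn(\mathbf{x})$ and, for each $\mathbf{p_j}\in knn(\mathbf{x})$, the set $knn(\mathbf{p_j})$; all of these are fixed points in $\mathcal{D}$, independent of $\mathbf{x}$. Consequently, for each fixed $\mathbf{p_j}\in knn(\mathbf{x})$, the quantity $k\text{-distance}(\mathbf{p_j})$ and each reachability distance $rd_k(\mathbf{p_j}, \mathbf{p_\ell})$ for $\mathbf{p_\ell}\in knn(\mathbf{p_j})$ are constants, hence so is $lrd_k(\mathbf{p_j})$. By Eq.~\ref{eq:LOF}, this lets me write
\begin{equation*}
\operatorname{LOF}_K(\mathbf{x})
\;=\;
\frac{C_K}{k\cdot lrd_k(\mathbf{x})}
\;=\;
\frac{C_K}{k^{2}}\sum_{\mathbf{p_j}\in knn(\mathbf{x})} rd_k(\mathbf{x}, \mathbf{p_j}),
\end{equation*}
where $C_K=\sum_{\mathbf{p_j}\in knn(\mathbf{x})}lrd_k(\mathbf{p_j})$ is a constant on $R_K$, and the only $\mathbf{x}$-dependence sits inside $rd_k(\mathbf{x},\mathbf{p_j})=\max\{k\text{-distance}(\mathbf{p_j}),\,d(\mathbf{x},\mathbf{p_j})\}$. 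Thus the question reduces to the regularity of a finite sum of such max-expressions.

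Next I would identify the bad sets. The map $\mathbf{x}\mapsto d(\mathbf{x},\mathbf{p_j})$ is $C^{2}$ away from $\{\mathbf{x}=\mathbf{p_j}\}$ (for a well-behaved $d$ such as the Euclidean norm, the only failure is at the origin of the difference), and $\max\{a,b\}$ is $C^{\infty}$ away from $\{a=b\}$. Hence for each $\mathbf{p_j}\in knn(\mathbf{x})$ the non-smoothness of $rd_k(\mathbf{x},\mathbf{p_j})$ is contained in the union of the singleton $\{\mathbf{p_j}\}$ and the sphere $S_j=\{\mathbf{x}:d(\mathbf{x},\mathbf{p_j})=k\text{-distance}(\mathbf{p_j})\}$. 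Each $S_j$ has Lebesgue measure zero (it is a codimension-one level set of a smooth function with non-vanishing gradient off $\mathbf{p_j}$), and the finite union over neighbours plus the finitely many singleton points still has measure zero. On the complement, each $rd_k(\mathbf{x},\mathbf{p_j})$ agrees locally with either the constant $k\text{-distance}(\mathbf{p_j})$ or the $C^{2}$ function $d(\mathbf{x},\mathbf{p_j})$, and is therefore $C^{2}$.

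Finally, I would assemble the pieces: $\operatorname{LOF}_K$ is a positive constant multiple of a finite sum of functions that are each $C^{2}$ off a measure-zero set, and the denominator $lrd_k(\mathbf{x})$ stays strictly positive because reachability distances are bounded below by the positive constants $k\text{-distance}(\mathbf{p_j})$. Continuity on all of $R_K$ follows from the continuity of $\max$ and $d$, while $C^{2}$ regularity transfers to the sum/ratio almost everywhere. The main technical point to argue carefully will be the measure-zero claim for the union $\bigcup_j S_j$: I must confirm that $d$ being ``well-behaved'' (in particular, having a gradient that does not vanish identically along level sets away from $\mathbf{p_j}$) implies each $S_j$ is a smooth hypersurface of measure zero. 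For the Euclidean case this is immediate, but stating the minimal hypothesis on $d$ cleanly is the only non-routine step; once granted, combining the pieces via the chain and quotient rules yields the claimed almost-everywhere $C^{2}$ regularity.
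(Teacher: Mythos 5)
Your overall strategy is the same as the paper's: freeze the combinatorial structure via the key $K$, reduce everything to reachability distances of the form $\max\{k\text{-distance}(\cdot),\,d(\cdot,\cdot)\}$, and observe that a maximum of two $C^{2}$ functions is $C^{2}$ away from the codimension-one level set where they coincide, so the kinks form a finite union of measure-zero sets. That part is sound.

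However, there is a genuine error in your reduction. You claim that for each neighbour $\mathbf{p_j}\in knn(\mathbf{x})$ the quantities $k\text{-distance}(\mathbf{p_j})$ and $lrd_k(\mathbf{p_j})$ are constants on $R_K$, which lets you pull the numerator out as a constant $C_K$ and write $\operatorname{LOF}_K(\mathbf{x})=\tfrac{C_K}{k^{2}}\sum_{j} rd_k(\mathbf{x},\mathbf{p_j})$. This is false in general: in the DCFO setting, $\mathbf{x}$ is inserted into the dataset as a new point (with $\mathbf{p}_i$ removed), so $\mathbf{x}$ can itself belong to $knn(\mathbf{p_j})$. In that case $lrd_k(\mathbf{p_j})$ contains the term $rd_k(\mathbf{p_j},\mathbf{x})=\max\{k\text{-distance}(\mathbf{x}),\,d(\mathbf{p_j},\mathbf{x})\}$, and $k\text{-distance}(\mathbf{p_j})$ may equal $d(\mathbf{p_j},\mathbf{x})$ — both genuinely depend on $\mathbf{x}$. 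The paper's proof explicitly flags exactly this: ``in general $lrd_k(\mathbf{y}_1)$ does depend on $\mathbf{x}$, as $\mathbf{x}$ could be one of the neighbours of $\mathbf{y}_1$.'' Your formula therefore misses an entire family of $\mathbf{x}$-dependent terms and their associated kink sets, including those coming from $k\text{-distance}(\mathbf{x})=\max_{\mathbf{p}\in knn(\mathbf{x})}d(\mathbf{x},\mathbf{p})$, which is itself a maximum of $k$ smooth functions and is non-differentiable where two neighbours are equidistant from $\mathbf{x}$. The conclusion survives — every such term is again a maximum of finitely many $C^{2}$ functions over an index set fixed by $K$, so the same measure-zero argument applies, and the quotient is well defined because all reachability sums stay strictly positive — but you must carry the numerator's $\mathbf{x}$-dependence through rather than declaring it constant.
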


\begin{proof}
    Marking with $knn_K$ the set of neighbours of $\mathbf{x}$, which are contained in key $K$, we need to prove that 
    \begin{equation*}
        \operatorname{LOF}_K(\mathbf{x}) = \dfrac{1}{k \cdot lrd_k(\mathbf{x})} \cdot \sum\limits_{\mathbf{y}_i \in knn_K}lrd_k(\mathbf{y}_i) \quad ,
    \end{equation*}
is twice continuously differentiable in the variable $\mathbf{x}$. Exploiting the property that the sum of differentiable functions is still differentiable, and leaving out the constant from the differentiation process, we are left to prove that  
\begin{equation*}
        \operatorname{LOF}_K(\mathbf{x}) = \dfrac{lrd_k(\mathbf{y}_1)}{lrd_k(\mathbf{x})}   \quad 
\end{equation*}
is differentiable. Where, without loss of generality, we deal with one specific element of $knn_K$, i.e., $\mathbf{y}_1$. Note that in general $lrd_k(\mathbf{y}_1)$ does depend on $\mathbf{x}$, as $\mathbf{x}$ could be one of the neighbours of $\mathbf{y}_1$. If $\mathbf{x}$ is not one of the neighbours of $\mathbf{y}_1$, than $lrd_k(\mathbf{y}_1)$ is just a constant and proving that $lrd_k(\mathbf{x})$ is twice continuously differentiable will suffice.

We start by treating the denominator, i.e., $lrd_k(\mathbf{x})$:
\begin{equation*}
    lrd_k(\mathbf{x}) = \dfrac{k}{
        \sum\limits_{\mathbf{y}_i\in knn_K} rd_k(\mathbf{x}, \mathbf{y}_i)}
         \quad 
\end{equation*}
Exploiting again the properties of differentiable functions and choosing again, without loss of generality, $\mathbf{y}_i = \mathbf{y}_1$, we are left to look at
\begin{equation*}
        rd_k(\mathbf{x}, \mathbf{y}_1) = \max \left \{ k\text{-distance}(\mathbf{y}_1), d(\mathbf{x}, \mathbf{y}_1) \right \} \quad 
\end{equation*}

Given the assumptions on the distance function $d$, both terms in $k\text{-distance}(\mathbf{y}_1)$ and $d(\mathbf{x}, \mathbf{y}_1)$ are twice continuously differentiable.
In the all the points $\mathbf{z}_i$ such that $k\text{-distance}(\mathbf{y}_1) = d(\mathbf{x}, \mathbf{y}_1)$, $rd_k(\mathbf{x}, \mathbf{y}_1)$ is continuous, as both $k\text{-distance}(\mathbf{y}_1)$ and $d(\mathbf{x}, \mathbf{y}_1)$ have the same value in $\mathbf{z}_i$. $rd_k(\mathbf{x}, \mathbf{y}_1)$ though, is in general non-differentiable in $\mathbf{z}_i$, as the limit toward $\mathbf{z}_i$ of the derivative of $rd_k(\mathbf{x}, \mathbf{y}_1)$ might differ depending on from where $\mathbf{z}_i$ is approached. Critically, the set of points $\mathbf{z}_i$, has measure $0$. Indeed, it is generated from the equality constraint $k\text{-distance}(\mathbf{y}_1) - d(\mathbf{x}, \mathbf{y}_1) = 0$. If our dataset $\mathcal{D}$ lies in $\mathcal{R}^n$, the subset of points $\mathbf{z}_i$ will thus lie in a lower dimensional space $\mathcal{R}^{n-1}$. Thus, $rd_k(\mathbf{x}, \mathbf{y}_1)$ and consequently $\operatorname{LOF}_K(\mathbf{x})$, will be twice continuously differentiable except for a set of points with measure $0$, i.e., almost everywhere. 

\end{proof}

\section{DiCE and FACE}\label{sec:dice&face}

As outlier detection can be framed as a binary classification task, counterfactuals methods designed for classification can be employed. Specifically, DiCE is a popular solution aiming for feasible and diverse counterfactuals~\cite{zhou2025eace}.  DiCE builds on the work by Wachter et al.~\cite{wachter2017counterfactual}, adding diversity on top of proximity as the objective of the optimisation problem. Gradient descent is used to find the optimal solution of the defined loss function. When applied to our case study, LOF's non-differentiability prevents DiCE to find valid counterfactuals.

FACE instead, is an algorithm for counterfactuals focusing on finding feasible and actionable paths to cross the decision boundary. FACE, even though it does not rely on gradient decent as it does not optimize a loss function, requires a probabilistic predictor. In order to implement FACE, we thus need to convert LOF scores into outlying probabilities. The method chosen is the one presented by Kriegel et al.~\cite{kriegel2011interpreting}. The authors in~\cite{kriegel2011interpreting}, propose a method to unify arbitrary outlier factors into interpretable probability values.

After converting LOF scores to outlier probabilities, FACE still fails to propose valid counterfactuals. This can be attributed not only to LOF's being non-continuous, but also the the nature of the outlier detection task itself. FACE is indeed based on a proximity graph connecting different instances. Outliers can be and typically are, isolated, making difficult to build a density-connected path from the outlier instance to the inliers' class.

\section{Baseline's Validity}\label{sec:baseline}
To provide a simple counterfactual method, used to benchmark DCFO, we proposed Baseline. Baseline simply moves the outlier to the position of the closest inlier. Despite its simplicity, this intuitive heuristics does not guarantee that the proposed counterfactual has an LOF score below the set threshold. 

When moving the outlier to the inlier, the two new scores will coincide, but the new joint score might be higher than the original inlier's score, highlighting the challenge in proposing outliers for LOF.
The motivation originates directly from LOF's definition and is mainly twofold. It can be linked to both LOF looking at relative density (instead of density) and to LOF's relying on the reachability distance (instead of the distance). If the inlier's density increases, its relative density might decrease. Furthermore, despite a point moving closer to it, the inlier's density might decrease as it is not defined on a true metric but on the reachability distance. Below, a simple example shows a specific case where Baseline's counterfactual is not valid.

A toy dataset of $10$ instances is drawn from a two dimensional standard normal distribution. Then, LOF is fitted with $k=2$. The dataset along with the LOF scores are displayed on Figure~\ref{fig:baselineExample1}. The dotted lines represent the neighbourhood of a point, i.e., the circle having the point as the centre, and the point's $k$-distance as the radius. The outlier to analyse is marked in red (threshold $=1.5$), while the closest inlier is drawn in green.

First of all, it might seem unintuitive that the red point in Figure~\ref{fig:baselineExample1} has a higher LOF score than the green point. This happens because the red point's second neighbour is in a very high density region, having a low LOF value of $0.86$. This makes the red point's relative density lower than the green point's relative density.

\begin{figure}[h]
        \centering
        \includegraphics[width=1\linewidth]{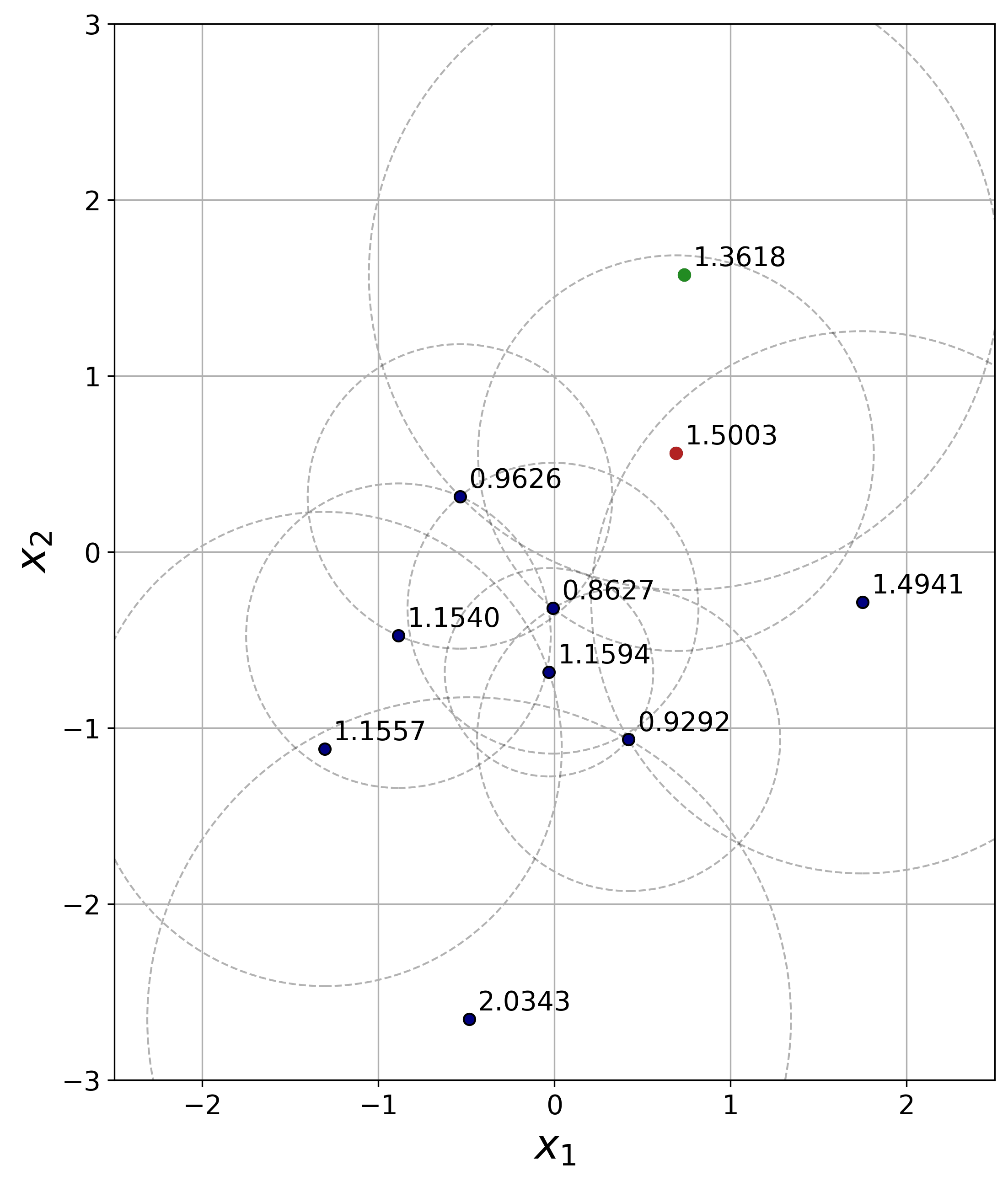}
        \caption{$2$D synthetic dataset drawn from a two dimensional standard normal distribution. The numbers next to the points represent their LOF score. In red the outlier analysed with its closest inlier in green.} 
        \label{fig:baselineExample1}
        \Description{}
    \end{figure}

We generate a counterfactual using the introduced Baseline method. The outlier is thus moved to the closest inlier, with the two points that will have a new common LOF score. Figure~\ref{fig:baselineExample2} shows the new setting, with the two superimposed points marked in red, having an LOF score of $1.56$, higher than the inlier's previous LOF score and above the set threshold.

\begin{figure}[h]
        \centering
        \includegraphics[width=1\linewidth]{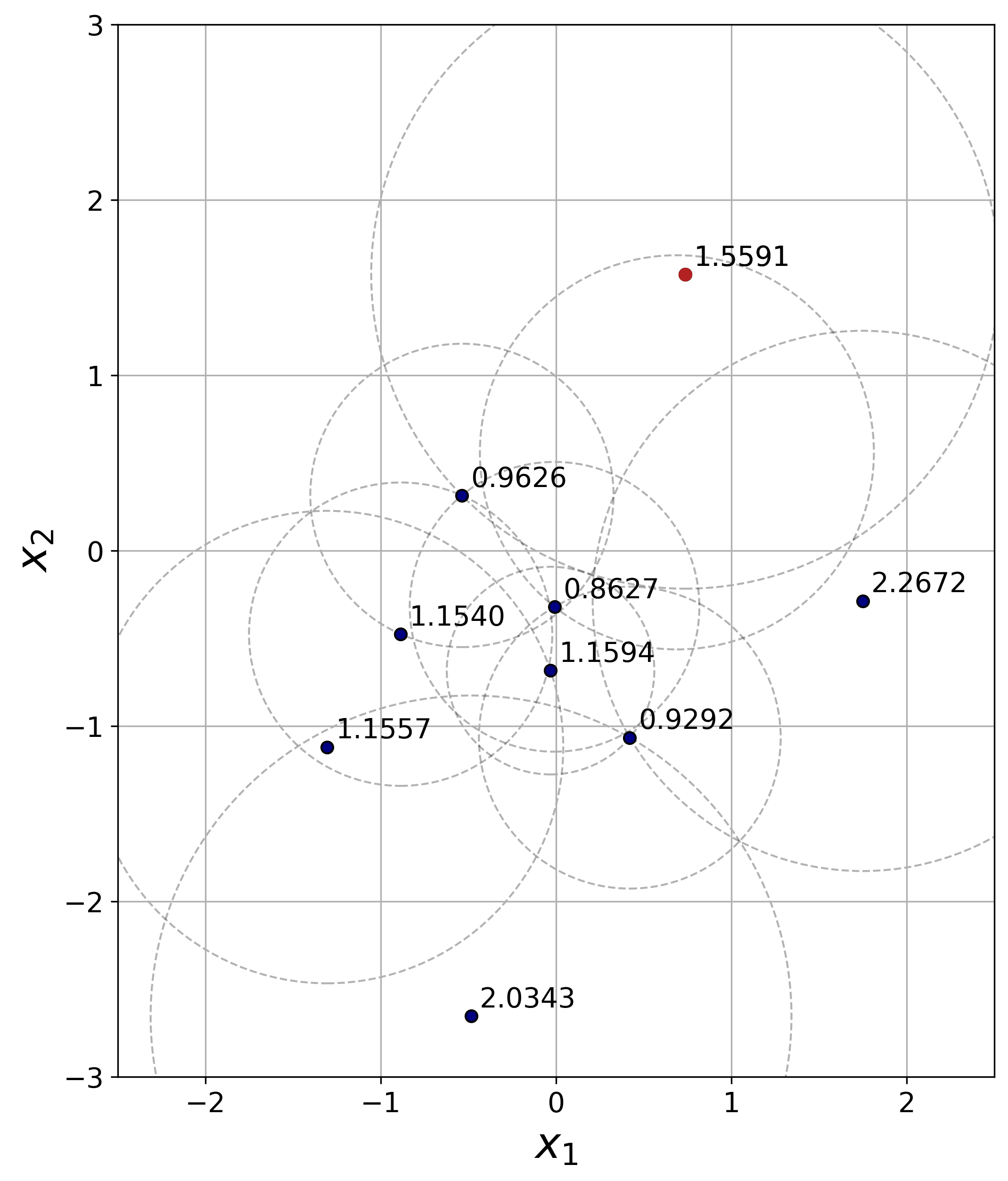}
        \caption{$2D$ synthetic dataset after the application of the Baseline method. The green and red points from Figure~\ref{fig:baselineExample1} are now superimposed (marked in red) and have an LOF score above the threshold.} 
        \label{fig:baselineExample2}
        \Description{}
    \end{figure}

The reason for the new LOF score being higher than that of the inlier can be mainly be attributed to the higher density (more precisely of the local reachability density) of the original inlier point in the new setting. Specifically, as written in Equation~\ref{eq:lrd}, the $lrd$ is the inverse of the average reachability distance of a point to its neighbours. In the case of the inlier, its two neighbours are the original outlier, which is now superimposed with the original inlier, and the point having an LOF score of $0.96$. 

Despite the outlier coming closer to the inlier, the reachability distance from the original inlier to the original outlier actually increases. This is because the outlier's $k$-distance increases, evident from the radius of the dotted circles, and it is higher than the Euclidean distance between the two points, which is $0$. The reachability distance to the second neighbour stays instead the same, as the two points do not move and their Euclidean distance is higher than the second neighbour's $k$-distance. Thus, the $lrd$ of the inlier decreases due to its $k$-distance to the closest neighbour (original outlier) increasing, causing the LOF to increase above the threshold.

\section{Counterfactuals' Empirical Evaluation}
Throughout the experiments, tables have been showcased for $10$ out of the total $50$ datasets employed, due to space reasons. Here, the complete results are displayed.

\subsection{Counterfactuals' Proximity}\label{subsec:fullProximity}
Section~\ref{subsec:bestCounterfactual}, shows how DCFO's counterfactuals score better in proximity with respect to competitors. Table~\ref{tab:fullProximity} expands the results covering all datasets tested. We can see how the trend already analysed in Table~\ref{tab:proximity}, is confirmed by Table~\ref{tab:fullProximity}. DCFO has indeed the best proximity among methods benchmarked across datasets. Only for two out of the $50$ datasets, i.e.\ \textit{longley} and \textit{chscase vine 1}, Baycon shows better proximity.


\begin{table*}[]
\caption{Results for proximity (lower is better) for all 50 datasets. Except for two datasets DCFO always performs best.}
\centering
\begin{tabular}{lccccc}
\hline
Dataset & k & DCFO & BayCon & EACE & Baseline \\
\hline
MeanWhile1 & 10 & \textbf{0.41 $\pm$ 0.06} & 0.83 $\pm$ 0.10 & 11 $\pm$ 1 & 5.0 $\pm$ 0.5 \\
MindCave2 & 15 & \textbf{0.4 $\pm$ 0.4} & 0.9 $\pm$ 0.8 & 9.95 $\pm$ 0.04 & 4.62 $\pm$ 0.06 \\
SPECTF & 20 & \textbf{0.37 $\pm$ 0.09} & 0.8 $\pm$ 0.2 & 9 $\pm$ 1 & 6.9 $\pm$ 0.2 \\
ar3 & 15 & \textbf{0.62 $\pm$ NA} & 0.74 $\pm$ NA & 3.42 $\pm$ NA & 2.34 $\pm$ NA \\
ar5 & 10 & \textbf{0.13 $\pm$ NA} & 0.23 $\pm$ NA & 5.86 $\pm$ NA & 3.22 $\pm$ NA \\
ar6 & 20 & \textbf{1 $\pm$ 1} & 2 $\pm$ 2 & 11 $\pm$ 5 & 4 $\pm$ 2 \\
autoPrice & 10 & \textbf{0.39 $\pm$ 0.05} & 0.63 $\pm$ 0.10 & 2.9 $\pm$ 0.3 & 3.1 $\pm$ 0.3 \\
baskball & 15 & \textbf{1.16 $\pm$ NA} & 2.29 $\pm$ NA & NA & 1.76 $\pm$ NA \\
blood-transfusion-service-center & 10 & \textbf{1.1 $\pm$ 0.2} & 1.1 $\pm$ 0.2 & 10 $\pm$ 2 & 1.4 $\pm$ 0.2 \\
bodyfat & 15 & \textbf{0.59 $\pm$ 0.10} & 0.9 $\pm$ 0.1 & 3.3 $\pm$ 0.5 & 3.5 $\pm$ 0.1 \\
chscase census2 & 10 & \textbf{0.30 $\pm$ 0.05} & 0.59 $\pm$ 0.09 & 1.9 $\pm$ 0.2 & 2.27 $\pm$ 0.09 \\
chscase census6 & 10 & \textbf{0.23 $\pm$ 0.07} & 0.8 $\pm$ 0.1 & 1.5 $\pm$ 0.3 & 1.56 $\pm$ 0.09 \\
chscase vine1 & 10 & 0.7 $\pm$ 0.2 & \textbf{0.5 $\pm$ 0.2} & 2.2 $\pm$ 0.2 & NA \\
confidence & 10 & \textbf{0.19 $\pm$ 0.05} & 0.22 $\pm$ 0.05 & 6.7 $\pm$ 0.7 & 0.70 $\pm$ 0.07 \\
diabetes & 10 & \textbf{0.29 $\pm$ 0.05} & 0.47 $\pm$ 0.07 & 1.7 $\pm$ 0.2 & 2.2 $\pm$ 0.1 \\
diabetes numeric & 10 & \textbf{0.59 $\pm$ NA} & 1.07 $\pm$ NA & NA & 1.55 $\pm$ NA \\
diamonds & 10 & \textbf{0.079 $\pm$ 0.006} & 0.126 $\pm$ 0.008 & 3.52 $\pm$ 0.07 & 0.208 $\pm$ 0.008 \\
diggle table a1 & 10 & \textbf{0.29 $\pm$ 0.07} & 0.35 $\pm$ 0.08 & NA & 1.796 $\pm$ 0.002 \\
disclosure x noise & 15 & \textbf{0.07 $\pm$ 0.02} & 0.15 $\pm$ 0.03 & 0.8 $\pm$ 0.1 & 0.81 $\pm$ 0.05 \\
ecoli & 15 & \textbf{0.17 $\pm$ 0.02} & 0.22 $\pm$ 0.01 & 1.2 $\pm$ 0.1 & 1.05 $\pm$ 0.04 \\
glass & 15 & \textbf{0.4 $\pm$ 0.2} & 0.7 $\pm$ 0.4 & 2.5 $\pm$ 0.5 & 1.4 $\pm$ 0.4 \\
heart-statlog & 20 & \textbf{0.05 $\pm$ NA} & 0.12 $\pm$ NA & 5.43 $\pm$ NA & 4.04 $\pm$ NA \\
ionosphere & 10 & \textbf{0.2 $\pm$ 0.1} & 0.4 $\pm$ 0.1 & 6 $\pm$ 1 & 2.5 $\pm$ 0.2 \\
iris & 15 & \textbf{0.07 $\pm$ NA} & 0.13 $\pm$ NA & 1.21 $\pm$ NA & 0.45 $\pm$ NA \\
kc1-top5 & 10 & \textbf{0.6 $\pm$ 0.2} & 3 $\pm$ 2 & 7 $\pm$ 2 & 9 $\pm$ 1 \\
kc2 & 10 & \textbf{0.39 $\pm$ 0.08} & 0.58 $\pm$ 0.09 & 3.0 $\pm$ 0.4 & 1.1 $\pm$ 0.1 \\
kc3 & 20 & \textbf{0.7 $\pm$ 0.1} & 0.9 $\pm$ 0.1 & 7.9 $\pm$ 0.5 & 2.6 $\pm$ 0.2 \\
libras move & 10 & \textbf{0.2 $\pm$ 0.1} & 0.9 $\pm$ 0.3 & 4 $\pm$ 2 & 6 $\pm$ 2 \\
liver-disorders & 10 & \textbf{0.15 $\pm$ 0.04} & 0.22 $\pm$ 0.05 & 1.3 $\pm$ 0.2 & 1.06 $\pm$ 0.07 \\
longley & 10 & 2.7 $\pm$ 0.6 & \textbf{1.7 $\pm$ 0.4} & NA & NA \\
machine cpu & 10 & \textbf{0.14 $\pm$ 0.08} & 0.29 $\pm$ 0.06 & 2 $\pm$ 1 & 1.03 $\pm$ 0.08 \\
mfeat-zernike & 15 & \textbf{0.9 $\pm$ 0.5} & 2 $\pm$ 1 & 14 $\pm$ 5 & 7 $\pm$ 3 \\
mu284 & 10 & \textbf{0.2 $\pm$ 0.1} & 0.6 $\pm$ 0.2 & 1.7 $\pm$ 0.5 & 0.96 $\pm$ 0.01 \\
no2 & 10 & \textbf{0.26 $\pm$ 0.08} & 0.33 $\pm$ 0.09 & 1.8 $\pm$ 0.4 & 1.7 $\pm$ 0.3 \\
pm10 & 15 & \textbf{0.02 $\pm$ NA} & 0.08 $\pm$ NA & 2.09 $\pm$ NA & 1.77 $\pm$ NA \\
prnn fglass & 10 & \textbf{0.24 $\pm$ 0.04} & 0.39 $\pm$ 0.07 & 2.7 $\pm$ 0.3 & 1.3 $\pm$ 0.3 \\
pyrim & 15 & \textbf{1.10 $\pm$ NA} & 3.35 $\pm$ NA & 6.28 $\pm$ NA & 8.66 $\pm$ NA \\
rabe 131 & 10 & \textbf{2.29 $\pm$ NA} & 3.09 $\pm$ NA & NA & 4.84 $\pm$ NA \\
sleep & 20 & \textbf{2.4 $\pm$ 0.6} & 3.2 $\pm$ 1.0 & NA & 4 $\pm$ 1 \\
sonar & 10 & \textbf{0.62 $\pm$ NA} & 1.59 $\pm$ NA & 20.02 $\pm$ NA & 9.14 $\pm$ NA \\
steel-plates-fault & 10 & \textbf{0.54 $\pm$ 0.10} & 0.7 $\pm$ 0.1 & 11 $\pm$ 1 & 3.1 $\pm$ 0.3 \\
strikes & 15 & \textbf{0.4 $\pm$ 0.3} & 0.5 $\pm$ 0.3 & 2 $\pm$ 1 & 1.3 $\pm$ 0.4 \\
tecator & 15 & \textbf{0.8 $\pm$ 0.3} & 2.4 $\pm$ 0.9 & NA & 6.9 $\pm$ 0.8 \\
triazines & 15 & \textbf{4.2 $\pm$ 0.6} & 7.9 $\pm$ 0.7 & NA & 10.7 $\pm$ 0.7 \\
vehicle & 15 & \textbf{1.1 $\pm$ 0.1} & 1.3 $\pm$ 0.1 & NA & 3.30 $\pm$ 0.10 \\
wdbc & 10 & \textbf{0.58 $\pm$ 0.05} & 1.20 $\pm$ 0.08 & 5.8 $\pm$ 0.3 & 4.7 $\pm$ 0.2 \\
wine & 10 & \textbf{0.21 $\pm$ 0.09} & 0.4 $\pm$ 0.2 & 4.0 $\pm$ 0.5 & 2.4 $\pm$ 0.5 \\
wine-quality-red & 20 & \textbf{0.26 $\pm$ 0.05} & 0.50 $\pm$ 0.09 & 3.4 $\pm$ 0.4 & 2.2 $\pm$ 0.2 \\
wine-quality-white & 10 & \textbf{0.39 $\pm$ 0.03} & 0.66 $\pm$ 0.04 & 3.6 $\pm$ 0.1 & 2.15 $\pm$ 0.05 \\
wisconsin & 10 & \textbf{0.36 $\pm$ 0.07} & 1.0 $\pm$ 0.2 & 7 $\pm$ 2 & 6.3 $\pm$ 0.2 \\
\bottomrule
\end{tabular}
\label{tab:fullProximity}
\end{table*}

\subsection{Counterfactuals' Validity}\label{subsec:fullValidity}
Table~\ref{tab:validity} is complemented here to include validity results for all $50$ datasets. Table~\ref{tab:fullValidity} shows DCFO being able to provide a valid counterfactual in all cases, as the average validity is $1$ for all datasets. Looking at average validity, Baseline scores second-best while EACE shows the worst validity behind Baycon.

\begin{table}[]
\caption{Results for validity (higher is better) for all 50 datasets. DCFO is the only method able to provide a valid counterfactual for all outliers.}
\centering
\begin{tabular}{lrllll}
\toprule
 & k & DCFO & Baycon & EACE & Baseline \\
\midrule
MeanWhile1 & 10 & \textbf{1.00} & 0.67 & 0.40 & \textbf{1.00} \\
MindCave2 & 15 & \textbf{1.00} & 0.88 & 0.25 & \textbf{1.00} \\
SPECTF & 20 & \textbf{1.00} & 0.88 & 0.27 & 0.96 \\
ar3 & 15 & \textbf{1.00} & 0.73 & 0.09 & \textbf{1.00} \\
ar5 & 10 & \textbf{1.00} & \textbf{1.00} & 0.33 & \textbf{1.00} \\
ar6 & 20 & \textbf{1.00} & 0.60 & 0.40 & \textbf{1.00} \\
autoPrice & 10 & \textbf{1.00} & 0.81 & 0.62 & \textbf{1.00} \\
baskball & 15 & \textbf{1.00} & \textbf{1.00} & 0.00 & \textbf{1.00} \\
blood-transfusion & 10 & \textbf{1.00} & 0.57 & 0.43 & 0.93 \\
bodyfat & 15 & \textbf{1.00} & \textbf{1.00} & 0.55 & 0.91 \\
chscase census2 & 10 & \textbf{1.00} & \textbf{1.00} & 0.50 & \textbf{1.00} \\
chscase census6 & 10 & \textbf{1.00} & \textbf{1.00} & 0.23 & \textbf{1.00} \\
chscase vine1 & 10 & \textbf{1.00} & 0.57 & 0.16 & 0.00 \\
confidence & 10 & \textbf{1.00} & \textbf{1.00} & \textbf{1.00} & \textbf{1.00} \\
diabetes & 10 & \textbf{1.00} & \textbf{1.00} & 0.33 & \textbf{1.00} \\
diabetes numeric & 10 & \textbf{1.00} & \textbf{1.00} & 0.00 & 0.50 \\
diamonds & 10 & \textbf{1.00} & 0.71 & 0.98 & 0.99 \\
diggle table a1 & 10 & \textbf{1.00} & \textbf{1.00} & 0.00 & \textbf{1.00} \\
disclosure x noise & 15 & \textbf{1.00} & \textbf{1.00} & 0.50 & \textbf{1.00} \\
ecoli & 15 & \textbf{1.00} & 0.88 & 0.27 & 0.96 \\
glass & 15 & \textbf{1.00} & 0.56 & 0.39 & 0.75 \\
heart-statlog & 20 & \textbf{1.00} & \textbf{1.00} & 0.50 & \textbf{1.00} \\
ionosphere & 10 & \textbf{1.00} & 0.15 & 0.13 & 0.41 \\
iris & 15 & \textbf{1.00} & \textbf{1.00} & 0.17 & 0.83 \\
kc1-top5 & 10 & \textbf{1.00} & 0.90 & 0.20 & 0.85 \\
kc2 & 10 & \textbf{1.00} & 0.73 & 0.47 & 0.73 \\
kc3 & 20 & \textbf{1.00} & 0.71 & 0.48 & 0.75 \\
libras move & 10 & \textbf{1.00} & 0.67 & 0.67 & \textbf{1.00} \\
liver-disorders & 10 & \textbf{1.00} & \textbf{1.00} & 0.56 & \textbf{1.00} \\
longley & 10 & \textbf{1.00} & 0.60 & 0.00 & 0.00 \\
machine cpu & 10 & \textbf{1.00} & \textbf{1.00} & 0.21 & 0.88 \\
mfeat-zernike & 15 & \textbf{1.00} & \textbf{1.00} & 0.67 & \textbf{1.00} \\
mu284 & 10 & \textbf{1.00} & 0.62 & 0.25 & \textbf{1.00} \\
no2 & 10 & \textbf{1.00} & \textbf{1.00} & 0.83 & \textbf{1.00} \\
pm10 & 15 & \textbf{1.00} & \textbf{1.00} & \textbf{1.00} & \textbf{1.00} \\
prnn fglass & 10 & \textbf{1.00} & 0.54 & 0.57 & 0.89 \\
pyrim & 15 & \textbf{1.00} & \textbf{1.00} & 0.25 & \textbf{1.00} \\
rabe 131 & 10 & \textbf{1.00} & \textbf{1.00} & 0.00 & \textbf{1.00} \\
sleep & 20 & \textbf{1.00} & 0.67 & 0.00 & \textbf{1.00} \\
sonar & 10 & \textbf{1.00} & \textbf{1.00} & \textbf{1.00} & \textbf{1.00} \\
steel-plates-fault & 10 & \textbf{1.00} & 0.45 & 0.62 & 0.98 \\
strikes & 15 & \textbf{1.00} & 0.90 & 0.30 & \textbf{1.00} \\
tecator & 15 & \textbf{1.00} & 0.71 & 0.00 & 0.86 \\
triazines & 15 & \textbf{1.00} & \textbf{1.00} & 0.00 & 0.90 \\
vehicle & 15 & \textbf{1.00} & 0.33 & 0.00 & \textbf{1.00} \\
wdbc & 10 & \textbf{1.00} & 0.96 & 0.59 & 0.96 \\
wine & 10 & \textbf{1.00} & \textbf{1.00} & 0.44 & 0.78 \\
wine-quality-red & 20 & \textbf{1.00} & 0.93 & 0.44 & 0.98 \\
wine-quality-white & 10 & \textbf{1.00} & 0.89 & 0.71 & 0.96 \\
wisconsin & 10 & \textbf{1.00} & \textbf{1.00} & 0.40 & \textbf{1.00} \\
\bottomrule
Average & - & \textbf{1.00} & 0.84 & 0.38 & 0.88\\
\end{tabular}
\label{tab:fullValidity}
\end{table}

\subsection{Non-actionable Features: Proximity}\label{subsec:nonActFullProximity}

We show, in Table~\ref{tab:nonActFullProximity}, proximity results for all $50$ OpenMl datasets when a percentage of the features are non-actionable. DCFO is the better performing model except for \textit{longley} and \textit{blood-transfusion}, confirming its superior performance in terms of proximity even when a subset of the features cannot be modified.

\begin{table*}[]
\caption{Results for proximity (lower is better) for all 50 datasets when a percentage of the features is deemed non-actionable. DCFO has better proximity than Baycon for all datasets but \textit{longley} and \textit{blood-transfusion}. The \%NAF column reports the percentage of non-actionable features for the given dataset.}
\centering
\begin{tabular}{lccc}
\hline
Dataset & k & DCFO & Baycon \\
\hline
MeanWhile1 & 15 & \textbf{1.4 $\pm$ 0.6} & 2 $\pm$ 1 \\
MindCave2 & 10 & \textbf{0.4 $\pm$ 0.2} & 0.8 $\pm$ 0.5 \\
SPECTF & 15 & \textbf{1.1 $\pm$ 0.2} & 2.5 $\pm$ 0.5 \\
ar3 & 10 & \textbf{1.5 $\pm$ 0.2} & 2.1 $\pm$ 0.4 \\
ar5 & 20 & \textbf{12.51 $\pm$ NA} & NA \\
ar6 & 10 & \textbf{0.9 $\pm$ 0.6} & 3.2 $\pm$ 1.0 \\
autoPrice & 20 & \textbf{1.1 $\pm$ 0.1} & 1.5 $\pm$ 0.2 \\
baskball & 10 & \textbf{0.6 $\pm$ 0.4} & 0.9 $\pm$ 0.7 \\
blood-transfusion-service-center & 10 & 2.3 $\pm$ 0.8 & \textbf{2.3 $\pm$ 0.8} \\
bodyfat & 10 & \textbf{2 $\pm$ 1} & 2 $\pm$ 1 \\
chscase census2 & 20 & \textbf{0.8 $\pm$ 0.2} & 1.5 $\pm$ 0.3 \\
chscase census6 & 15 & \textbf{1.9 $\pm$ 0.4} & 2.7 $\pm$ 0.5 \\
chscase vine1 & 20 & \textbf{0.95 $\pm$ NA} & 1.19 $\pm$ NA \\
confidence & 20 & \textbf{0.2 $\pm$ 0.1} & 0.2 $\pm$ 0.1 \\
diabetes & 10 & \textbf{0.8 $\pm$ 0.2} & 1.3 $\pm$ 0.3 \\
diabetes numeric & 15 & \textbf{0.15 $\pm$ NA} & 0.18 $\pm$ NA \\
diamonds & 10 & \textbf{0.084 $\pm$ 0.007} & 0.130 $\pm$ 0.010 \\
diggle table a1 & 15 & \textbf{0.14 $\pm$ 0.01} & 0.2 $\pm$ 0.1 \\
disclosure x noise & 15 & \textbf{0.27 $\pm$ 0.07} & 0.5 $\pm$ 0.1 \\
ecoli & 15 & \textbf{2.1 $\pm$ 0.5} & 2.4 $\pm$ 0.5 \\
glass & 10 & \textbf{0.7 $\pm$ 0.2} & 0.9 $\pm$ 0.3 \\
heart-statlog & 15 & \textbf{0.6 $\pm$ 0.5} & 0.8 $\pm$ 0.5 \\
ionosphere & 10 & \textbf{0.5 $\pm$ 0.1} & 0.8 $\pm$ 0.2 \\
iris & 15 & \textbf{0.5 $\pm$ 0.1} & 0.6 $\pm$ 0.1 \\
kc1-top5 & 10 & \textbf{2.4 $\pm$ 0.8} & 5 $\pm$ 1 \\
kc2 & 10 & \textbf{0.8 $\pm$ 0.2} & 1.3 $\pm$ 0.3 \\
kc3 & 15 & \textbf{0.8 $\pm$ 0.2} & 1.2 $\pm$ 0.3 \\
libras move & 20 & \textbf{0.61 $\pm$ NA} & 1.68 $\pm$ NA \\
liver-disorders & 10 & \textbf{0.7 $\pm$ 0.2} & 1.6 $\pm$ 0.5 \\
longley & 10 & 2.6 $\pm$ 0.6 & \textbf{2.1 $\pm$ 0.6} \\
machine cpu & 20 & \textbf{1.4 $\pm$ 0.3} & 1.7 $\pm$ 0.3 \\
mfeat-zernike & 10 & \textbf{1.7 $\pm$ 0.6} & 3 $\pm$ 1 \\
mu284 & 15 & \textbf{0.39 $\pm$ 0.08} & 0.6 $\pm$ 0.1 \\
no2 & 15 & \textbf{0.23 $\pm$ 0.09} & 0.25 $\pm$ 0.08 \\
pm10 & 15 & \textbf{0.02 $\pm$ NA} & 0.06 $\pm$ NA \\
pyrim & 20 & \textbf{2 $\pm$ 1} & 3 $\pm$ 1 \\
rabe 131 & 10 & \textbf{2.43 $\pm$ NA} & 3.07 $\pm$ NA \\
sleep & 15 & \textbf{2.9 $\pm$ 0.9} & 3.3 $\pm$ 0.9 \\
sonar & 15 & \textbf{0.97 $\pm$ NA} & 3.43 $\pm$ NA \\
steel-plates-fault & 15 & \textbf{0.56 $\pm$ 0.09} & 1.0 $\pm$ 0.2 \\
strikes & 10 & \textbf{0.8 $\pm$ 0.2} & 0.9 $\pm$ 0.2 \\
tecator & 20 & \textbf{1.7 $\pm$ 0.5} & 4 $\pm$ 1 \\
triazines & 10 & \textbf{3.5 $\pm$ 0.5} & 6.1 $\pm$ 0.6 \\
vehicle & 20 & \textbf{6 $\pm$ 2} & 6 $\pm$ 2 \\
wdbc & 20 & \textbf{2.1 $\pm$ 0.4} & 3.2 $\pm$ 0.7 \\
wine & 10 & \textbf{0.7 $\pm$ 0.2} & 1.1 $\pm$ 0.3 \\
wine-quality-red & 10 & \textbf{0.7 $\pm$ 0.1} & 1.3 $\pm$ 0.3 \\
wine-quality-white & 20 & \textbf{1.1 $\pm$ 0.2} & 1.6 $\pm$ 0.2 \\
wisconsin & 10 & \textbf{2 $\pm$ 1} & 3 $\pm$ 1 \\
\hline
\end{tabular}
\label{tab:nonActFullProximity}
\end{table*}

\subsection{Non-actionable Features: Validity}\label{subsec:nonActFullValidity}

Table~\ref{tab:nonActFullValidity} reports validity results for all datasets when a percentage of the features are non-actionable. DCFO shows higher validity than Baycon in all datasets. Compared with the results in Table~\ref{tab:fullValidity}, validity is on average lower in several datasets, for both methods. This underscores the challenges of dealing with non-actionable features, as the added constraints limit the space in which the respective optimisers can move. 

\begin{table}[]
\caption{Results for validity (higher is better) for all 50 datasets when a percentage of the features is deemed non-actionable. DCFO is able to provide the same or a higher percentage of valid counterfactuals for all datasets. The \%NAF column reports the percentage of non-actionable features for the given dataset.}
\centering
\begin{tabular}{llrll}
\toprule
 & \%NAF & k & DCFO & Baycon \\
\midrule
MeanWhile1 & 0.27 & 15 & \textbf{0.73} & 0.47 \\
MindCave2 & 0.31 & 10 & \textbf{0.71} & 0.43 \\
SPECTF & 0.11 & 15 & \textbf{1.00} & 0.76 \\
ar3 & 0.07 & 10 & \textbf{1.00} & 0.56 \\
ar5 & 0.10 & 20 & \textbf{1.00} & 0.00 \\
ar6 & 0.10 & 10 & \textbf{0.50} & \textbf{0.50} \\
autoPrice & 0.13 & 20 & \textbf{1.00} & \textbf{1.00} \\
baskball & 0.25 & 10 & \textbf{1.00} & \textbf{1.00} \\
blood-transfusion & 0.25 & 10 & \textbf{0.93} & 0.57 \\
bodyfat & 0.29 & 10 & \textbf{0.73} & \textbf{0.73} \\
chscase census2 & 0.29 & 20 & \textbf{0.81} & \textbf{0.81} \\
chscase census6 & 0.17 & 15 & \textbf{0.87} & \textbf{0.87} \\
chscase vine1 & 0.33 & 20 & \textbf{1.00} & \textbf{1.00} \\
confidence & 0.33 & 20 & \textbf{1.00} & \textbf{1.00} \\
diabetes & 0.12 & 10 & \textbf{0.83} & 0.75 \\
diabetes numeric & 0.50 & 15 & \textbf{0.50} & \textbf{0.50} \\
diamonds & 0.11 & 10 & \textbf{0.71} & 0.46 \\
diggle table a1 & 0.25 & 15 & \textbf{1.00} & \textbf{1.00} \\
disclosure x noise & 0.33 & 15 & \textbf{0.89} & \textbf{0.89} \\
ecoli & 0.14 & 15 & \textbf{0.96} & 0.92 \\
glass & 0.33 & 10 & \textbf{0.65} & 0.49 \\
heart-statlog & 0.31 & 15 & \textbf{1.00} & \textbf{1.00} \\
ionosphere & 0.06 & 10 & \textbf{0.72} & 0.13 \\
iris & 0.25 & 15 & \textbf{1.00} & \textbf{1.00} \\
kc1-top5 & 0.44 & 10 & \textbf{0.55} & 0.40 \\
kc2 & 0.14 & 10 & \textbf{0.78} & 0.65 \\
kc3 & 0.26 & 15 & \textbf{0.68} & 0.40 \\
libras move & 0.28 & 20 & \textbf{1.00} & \textbf{1.00} \\
liver-disorders & 0.20 & 10 & \textbf{0.94} & \textbf{0.94} \\
longley & 0.17 & 10 & \textbf{0.73} & 0.47 \\
machine cpu & 0.17 & 20 & \textbf{0.80} & 0.77 \\
mfeat-zernike & 0.26 & 10 & \textbf{1.00} & \textbf{1.00} \\
mu284 & 0.22 & 15 & \textbf{0.62} & \textbf{0.62} \\
no2 & 0.29 & 15 & \textbf{1.00} & \textbf{1.00} \\
pm10 & 0.29 & 15 & \textbf{1.00} & \textbf{1.00} \\
pyrim & 0.33 & 20 & \textbf{1.00} & 0.67 \\
rabe 131 & 0.20 & 10 & \textbf{1.00} & \textbf{1.00} \\
sleep & 0.14 & 15 & \textbf{0.67} & \textbf{0.67} \\
sonar & 0.45 & 15 & \textbf{1.00} & \textbf{1.00} \\
steel-plates-fault & 0.18 & 15 & \textbf{0.88} & 0.52 \\
strikes & 0.17 & 10 & \textbf{1.00} & 0.91 \\
tecator & 0.28 & 20 & \textbf{0.92} & 0.69 \\
triazines & 0.27 & 10 & \textbf{0.65} & 0.61 \\
vehicle & 0.28 & 20 & \textbf{1.00} & 0.58 \\
wdbc & 0.13 & 20 & \textbf{1.00} & 0.94 \\
wine & 0.08 & 10 & \textbf{1.00} & \textbf{1.00} \\
wine-quality-red & 0.36 & 10 & \textbf{0.89} & 0.75 \\
wine-quality-white & 0.27 & 20 & \textbf{0.95} & 0.90 \\
wisconsin & 0.47 & 10 & \textbf{0.90} & 0.60 \\
\bottomrule
Average & - & - & \textbf{0.87} & 0.73 \\
\end{tabular}
\label{tab:nonActFullValidity}
\end{table}

\section{Run Time Analysis}\label{sec:runtime}
Despite LOF being a non-continuous function, DCFO manages to use gradient-based optimisation, which is generically faster than Bayesian and genetic algorithms. In this section, we benchmark DCFO's run time against competitors. Table~\ref{tab:runTime} showcases average run times (in seconds) across outliers for all datasets. Because different datasets differ in complexity, we do not take an average, but an aggregation of results across datasets is carried out through a critical difference diagram, shown in Figure~\ref{fig:cdPlotRunTime}. For DCFO, EACE and Baseline we retrieve one single counterfactual while Baycon finds multiple by default.

\begin{table*}[]
\caption{Average run time (in seconds) across outliers for all datasets. Baseline, thanks to its simplicity, consistently outperforms other methods. DCFO generally outperforms Baycon and EACE thanks to its gradient-based optimisation.}
\centering
\begin{tabular}{lccccc}
\hline
Dataset & k & DCFO & BayCon & EACE & Baseline \\
\hline
MeanWhile1 & 20 & 1.6 $\pm$ 0.2 & 16 $\pm$ 2 & 2.52 $\pm$ 0.02 & \textbf{0.006 $\pm$ 0.002} \\
MindCave2 & 15 & 1.02 $\pm$ 0.04 & 9.7 $\pm$ 0.1 & 2.634 $\pm$ 0.009 & $\mathbf{2.3 \times 10^{-3}}$ $\pm$ $\mathbf{3 \times 10^{-5}}$ \\
SPECTF & 20 & 1.88 $\pm$ 0.06 & 17.9 $\pm$ 0.8 & 2.857 $\pm$ 0.003 & $\mathbf{5.9 \times 10^{-3}}$ $\pm$ $\mathbf{8 \times 10^{-4}}$ \\
ar3 & 10 & 2.69 $\pm$ NA & 6.50 $\pm$ NA & 2.01 $\pm$ NA & $\mathbf{1.4 \times 10^{-3}}$ $\pm$ $\mathbf{2 \times 10^{-19}}$ \\
ar5 & 15 & 7.47 $\pm$ NA & 26.09 $\pm$ NA & NA & $\mathbf{3.2 \times 10^{-3}}$ $\pm$ $\mathbf{3 \times 10^{-19}}$ \\
ar6 & 20 & 2.6 $\pm$ 0.2 & 11.8 $\pm$ 0.7 & 2.09 $\pm$ 0.01 & $\mathbf{2.3 \times 10^{-3}}$ $\pm$ $\mathbf{1 \times 10^{-4}}$ \\
autoPrice & 20 & 0.66 $\pm$ NA & 8.72 $\pm$ NA & 1.11 $\pm$ NA & $\mathbf{1.1 \times 10^{-3}}$ $\pm$ $\mathbf{1 \times 10^{-19}}$ \\
baskball & 10 & $8.6 \times 10^{-2}$ $\pm$ $6 \times 10^{-18}$ & 1.25 $\pm$ NA & 0.76 $\pm$ NA & $\mathbf{7 \times 10^{-4}}$ $\pm$ $\mathbf{5 \times 10^{-20}}$ \\
blood-transfusion-service-center & 20 & 0.99 $\pm$ 0.07 & 2.274 $\pm$ 0.007 & 1.36 $\pm$ 0.02 & $\mathbf{2.3 \times 10^{-3}}$ $\pm$ $\mathbf{2 \times 10^{-5}}$ \\
bodyfat & 10 & 0.172 $\pm$ 0.006 & 3.72 $\pm$ 0.06 & 1.123 $\pm$ 0.002 & $\mathbf{1.7 \times 10^{-3}}$ $\pm$ $\mathbf{3 \times 10^{-6}}$ \\
chscase census2 & 10 & 0.21 $\pm$ 0.01 & 2.02 $\pm$ 0.03 & 1.04 $\pm$ 0.01 & $\mathbf{2.4 \times 10^{-3}}$ $\pm$ $\mathbf{5 \times 10^{-6}}$ \\
chscase census6 & 10 & 0.229 $\pm$ 0.006 & 1.84 $\pm$ 0.03 & 1.06 $\pm$ 0.03 & $\mathbf{2.1 \times 10^{-3}}$ $\pm$ $\mathbf{2 \times 10^{-6}}$ \\
chscase vine1 & 20 & 1.66 $\pm$ NA & 4.49 $\pm$ NA & 0.76 $\pm$ NA & $\mathbf{6.4 \times 10^{-4}}$ $\pm$ $\mathbf{4 \times 10^{-20}}$ \\
confidence & 10 & 0.29 $\pm$ 0.03 & 1.25 $\pm$ 0.01 & 0.84 $\pm$ 0.04 & $\mathbf{6.5 \times 10^{-4}}$ $\pm$ $\mathbf{2 \times 10^{-6}}$ \\
diabetes & 15 & 0.51 $\pm$ 0.03 & 2.88 $\pm$ 0.04 & 1.19 $\pm$ 0.02 & $\mathbf{5 \times 10^{-3}}$ $\pm$ $\mathbf{1 \times 10^{-5}}$ \\
diabetes numeric & 20 & 0.50 $\pm$ NA & 1.47 $\pm$ NA & NA & \textbf{0.00 $\pm$ NA} \\
diggle table a1 & 20 & 9.9 $\pm$ 0.6 & \textbf{6.2 $\pm$ 0.3} & NA & NA \\
disclosure x noise & 20 & 0.36 $\pm$ 0.02 & 2.60 $\pm$ 0.07 & 0.82 $\pm$ 0.02 & $\mathbf{2.8 \times 10^{-3}}$ $\pm$ $\mathbf{8 \times 10^{-6}}$ \\
ecoli & 10 & 0.31 $\pm$ 0.02 & 2.09 $\pm$ 0.03 & 1.084 $\pm$ 0.010 & $\mathbf{1.8 \times 10^{-3}}$ $\pm$ $\mathbf{3 \times 10^{-6}}$ \\
glass & 15 & 0.60 $\pm$ 0.05 & 3.4 $\pm$ 0.1 & 1.11 $\pm$ 0.02 & $\mathbf{1.2 \times 10^{-3}}$ $\pm$ $\mathbf{2 \times 10^{-6}}$ \\
heart-statlog & 15 & 0.45 $\pm$ NA & 3.24 $\pm$ NA & 1.13 $\pm$ NA & $\mathbf{1.9 \times 10^{-3}}$ $\pm$ $\mathbf{2 \times 10^{-19}}$ \\
ionosphere & 10 & 4 $\pm$ 1 & 10 $\pm$ 1 & 2.37 $\pm$ 0.03 & $\mathbf{3.8 \times 10^{-3}}$ $\pm$ $\mathbf{2 \times 10^{-4}}$ \\
iris & 15 & $1.8 \times 10^{-1}$ $\pm$ $8 \times 10^{-18}$ & 1.77 $\pm$ NA & 0.91 $\pm$ NA & $\mathbf{8.5 \times 10^{-4}}$ $\pm$ $\mathbf{5 \times 10^{-20}}$ \\
kc1-top5 & 15 & 3.13 $\pm$ 0.09 & 29.5 $\pm$ 0.5 & 6.11 $\pm$ 0.02 & $\mathbf{3.9 \times 10^{-3}}$ $\pm$ $\mathbf{2 \times 10^{-4}}$ \\
kc2 & 10 & 1.8 $\pm$ 0.2 & 8.2 $\pm$ 0.8 & 1.680 $\pm$ 0.005 & $\mathbf{3.9 \times 10^{-3}}$ $\pm$ $\mathbf{1 \times 10^{-4}}$ \\
kc3 & 15 & 2.6 $\pm$ 0.1 & 18.3 $\pm$ 0.4 & 2.643 $\pm$ 0.002 & $\mathbf{3.9 \times 10^{-3}}$ $\pm$ $\mathbf{5 \times 10^{-5}}$ \\
libras move & 10 & 1.60 $\pm$ NA & 15.63 $\pm$ NA & 5.74 $\pm$ NA & $\mathbf{8.3 \times 10^{-3}}$ $\pm$ $\mathbf{1 \times 10^{-18}}$ \\
liver-disorders & 15 & 0.173 $\pm$ 0.004 & 2.29 $\pm$ 0.03 & 1.11 $\pm$ 0.02 & $\mathbf{1.8 \times 10^{-3}}$ $\pm$ $\mathbf{4 \times 10^{-6}}$ \\
longley & 10 & \textbf{3.2 $\pm$ 0.2} & 3.7 $\pm$ 0.2 & NA & NA \\
machine cpu & 10 & 0.29 $\pm$ 0.02 & 2.11 $\pm$ 0.10 & 1.01 $\pm$ 0.03 & $\mathbf{1 \times 10^{-3}}$ $\pm$ $\mathbf{2 \times 10^{-6}}$ \\
mfeat-zernike & 15 & 20 $\pm$ 3 & 24 $\pm$ 2 & 3.364 $\pm$ 0.003 & $\mathbf{1.6 \times 10^{-2}}$ $\pm$ $\mathbf{2 \times 10^{-4}}$ \\
mu284 & 20 & 1.12 $\pm$ NA & 3.16 $\pm$ NA & 0.99 $\pm$ NA & $\mathbf{1.6 \times 10^{-3}}$ $\pm$ $\mathbf{1 \times 10^{-19}}$ \\
no2 & 10 & 0.122 $\pm$ 0.004 & 2.61 $\pm$ 0.09 & 0.92 $\pm$ 0.01 & $\mathbf{3.1 \times 10^{-3}}$ $\pm$ $\mathbf{1 \times 10^{-5}}$ \\
pm10 & 15 & $1.2 \times 10^{-1}$ $\pm$ $1 \times 10^{-17}$ & 2.52 $\pm$ NA & 1.75 $\pm$ NA & \textbf{0.00 $\pm$ NA} \\
prnn fglass & 10 & 0.28 $\pm$ 0.03 & 2.14 $\pm$ 0.07 & 1.07 $\pm$ 0.05 & $\mathbf{1.1 \times 10^{-3}}$ $\pm$ $\mathbf{3 \times 10^{-6}}$ \\
pyrim & 20 & 1.24 $\pm$ 0.03 & 19 $\pm$ 1 & NA & $\mathbf{1.7 \times 10^{-3}}$ $\pm$ $\mathbf{3 \times 10^{-5}}$ \\
rabe 131 & 10 & $2 \times 10^{-1}$ $\pm$ $6 \times 10^{-18}$ & 3.75 $\pm$ NA & NA & $\mathbf{6.4 \times 10^{-4}}$ $\pm$ $\mathbf{7 \times 10^{-20}}$ \\
sleep & 15 & 0.414 $\pm$ 0.004 & 2.02 $\pm$ 0.04 & NA & $\mathbf{6.5 \times 10^{-4}}$ $\pm$ $\mathbf{1 \times 10^{-6}}$ \\
sonar & 15 & 1.58 $\pm$ NA & 18.31 $\pm$ NA & 3.73 $\pm$ NA & $\mathbf{2 \times 10^{-3}}$ $\pm$ $\mathbf{2 \times 10^{-19}}$ \\
steel-plates-fault & 15 & 6 $\pm$ 2 & 13 $\pm$ 1 & 2.61 $\pm$ 0.02 & $\mathbf{1.5 \times 10^{-2}}$ $\pm$ $\mathbf{4 \times 10^{-4}}$ \\
strikes & 20 & 0.94 $\pm$ 0.10 & 3.06 $\pm$ 0.04 & 1.44 $\pm$ 0.05 & $\mathbf{4.3 \times 10^{-3}}$ $\pm$ $\mathbf{3 \times 10^{-5}}$ \\
tecator & 10 & 1.83 $\pm$ 0.07 & 23 $\pm$ 1 & 8.072 $\pm$ 0.001 & $\mathbf{3.4 \times 10^{-3}}$ $\pm$ $\mathbf{2 \times 10^{-4}}$ \\
triazines & 15 & 1.77 $\pm$ NA & 34.53 $\pm$ NA & 3.88 $\pm$ NA & $\mathbf{3.5 \times 10^{-3}}$ $\pm$ $\mathbf{2 \times 10^{-19}}$ \\
vehicle & 15 & 1.6 $\pm$ 0.2 & 10.9 $\pm$ 0.6 & NA & $\mathbf{6.4 \times 10^{-3}}$ $\pm$ $\mathbf{8 \times 10^{-5}}$ \\
wdbc & 15 & 0.87 $\pm$ 0.03 & 10.5 $\pm$ 0.4 & 2.141 $\pm$ 0.002 & $\mathbf{5 \times 10^{-3}}$ $\pm$ $\mathbf{8 \times 10^{-5}}$ \\
wine & 10 & 0.50 $\pm$ 0.04 & 3.04 $\pm$ 0.09 & $1 \times 10^{0}$ $\pm$ $5 \times 10^{-4}$ & $\mathbf{1.1 \times 10^{-3}}$ $\pm$ $\mathbf{1 \times 10^{-6}}$ \\
wine-quality-red & 10 & 0.29 $\pm$ 0.02 & 3.08 $\pm$ 0.07 & 1.72 $\pm$ 0.04 & $\mathbf{1.7 \times 10^{-2}}$ $\pm$ $\mathbf{1 \times 10^{-5}}$ \\
wine-quality-white & 20 & 1.04 $\pm$ 0.04 & 5.19 $\pm$ 0.07 & 2.97 $\pm$ 0.04 & $\mathbf{1.3 \times 10^{-1}}$ $\pm$ $\mathbf{4 \times 10^{-5}}$ \\
wisconsin & 10 & 0.43 $\pm$ 0.05 & 17 $\pm$ 2 & 2.186 $\pm$ 0.003 & $\mathbf{2.6 \times 10^{-3}}$ $\pm$ $\mathbf{2 \times 10^{-4}}$ \\
\hline
\end{tabular}
\label{tab:runTime}
\end{table*}

\begin{figure}[tbp]
        \centering
        \includegraphics[width=1\linewidth]{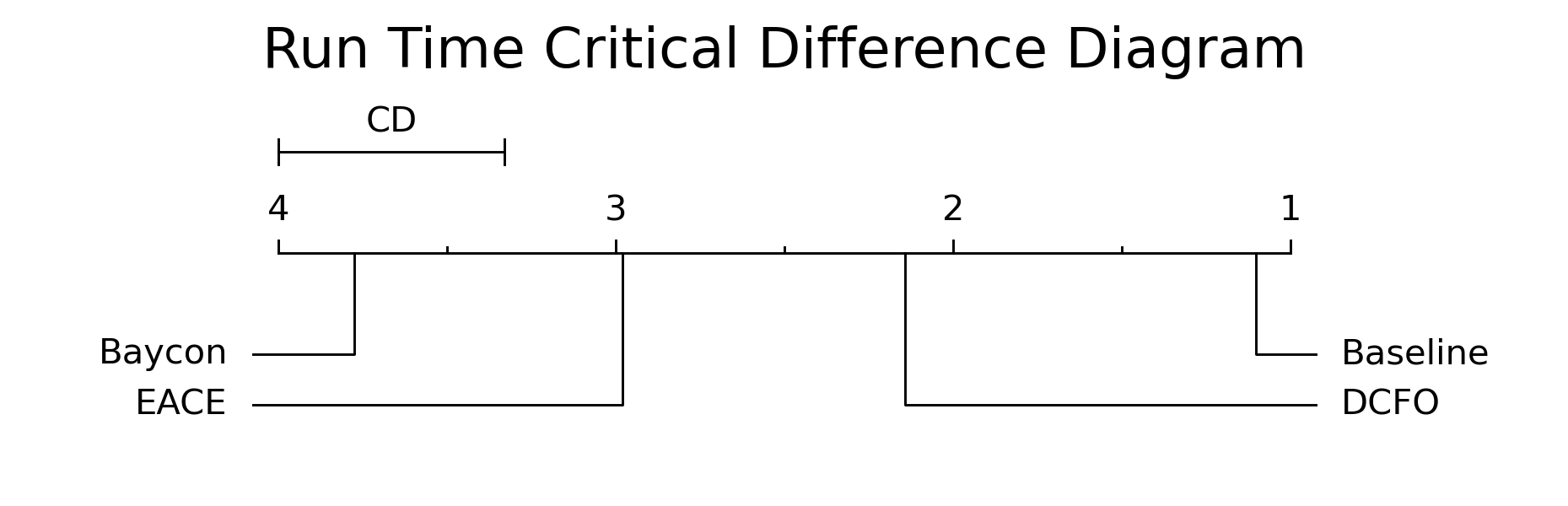}
        \caption{Run time Critical Difference diagram comparing run times across counterfactual generation methods. Baseline, the simplest algorithm, consistently ranks first. DCFO ranks higher than EACE and Baycon, with the result being statistically significant according to Nemenyi's test.}
        \label{fig:cdPlotRunTime} 
        \Description{}
    \end{figure}

Run times testify that Baseline is consistently the best method computationally. Baseline simply moves the outlier to the closest inlier: a simple and computationally cheap heuristics. Both Table~\ref{tab:runTime} and Figure~\ref{fig:cdPlotRunTime}, show that DCFO's gradient-based algorithm is more efficient than Bayesian and genetic algorithms, represented by Baycon and EACE respectively. 

\section{LOF's parameters sensitivity analysis}

We repeat here DCFO's comparison with studied baselines. Differently from Section~\ref{subsec:bestCounterfactual}, we expand the range of LOF's hyperparameters used, i.e. $k$ and the outliers' threshold. This sensitivity analysis whose results are reported in Table~\ref{tab:sensitivityProximity} and Table , shows that DCFO superior performance over baselines is robust to LOF's hyperparameters.

\begin{table*}[h!]
\caption{Proximity results for the sensitivity analysis over LOF's parameters, extended with the FullOpt baseline (without access to regions). Results are in line with Table~\ref{tab:fullProximity}, showing DCFO's robustness to various $k$ and threshold values, with DCFO having better proximity for every dataset over the methods also compared in Table~\ref{tab:fullProximity}. While FullOpt seems sometime better than DCFO, in reality it finds the same counterfactuals as DCFO but only in the "easy" and proximal cases in which the optimisation never leaves the starting region while not giving a solution in the other cases. Thus, results need to be read jointly with validity information which is extremely low for FullOpt.}
\centering                 
\begin{tabular}{lccccccc}
\hline
Dataset & k & Threshold & FullOpt & DCFO & BayCon & EACE & Baseline \\
\hline
MeanWhile1 & 8 & 1.31 & \textbf{0.01 $\pm$ \text{NA}} & 0.7 $\pm$ 0.2 & 1.1 $\pm$ 0.3 & 7 $\pm$ 2 & 3.5 $\pm$ 0.3 \\
MindCave2 & 5 & 1.54 & $\text{NA}$ & \textbf{0.5 $\pm$ 0.1} & 0.9 $\pm$ 0.2 & 10 $\pm$ 2 & 3.6 $\pm$ 0.4 \\
SPECTF & 20 & 1.39 & \textbf{0.4 $\pm$ 0.1} & 0.7 $\pm$ 0.2 & 1.9 $\pm$ 0.4 & 4.0 $\pm$ 0.2 & 6.60 $\pm$ 0.01 \\
ar3 & 10 & 1.75 & $\text{NA}$ & \textbf{0.5 $\pm$ 0.3} & 0.8 $\pm$ 0.2 & 5.3 $\pm$ 0.3 & 3.3 $\pm$ 0.3 \\
ar5 & 8 & 1.38 & $\text{NA}$ & \textbf{3 $\pm$ 2} & 5 $\pm$ 3 & NA & 6 $\pm$ 2 \\
ar6 & 20 & 1.42 & \textbf{0.09 $\pm$ \text{NA}} & 0.41 $\pm$ 0.06 & 0.9 $\pm$ 0.3 & 10 $\pm$ 2 & 3.6 $\pm$ 0.8 \\
autoPrice & 16 & 1.32 & $\text{NA}$ & \textbf{0.22 $\pm$ 0.08} & 0.4 $\pm$ 0.1 & 3.1 $\pm$ 0.4 & 2.7 $\pm$ 0.3 \\
baskball & 8 & 1.19 & $\text{NA}$ & \textbf{0.03 $\pm$ NA} & 0.07 $\pm$ NA & 0.45 $\pm$ NA & 0.87 $\pm$ NA \\
blood-transfusion-service-center & 12 & 1.15 & $\text{NA}$ & \textbf{0.3 $\pm$ 0.1} & 0.3 $\pm$ 0.1 & 3 $\pm$ 1 & 0.5 $\pm$ 0.1 \\
bodyfat & 14 & 1.29 & $\text{NA}$ & \textbf{0.22 $\pm$ 0.09} & 0.4 $\pm$ 0.1 & 3 $\pm$ 1 & 2.33 $\pm$ 0.07 \\
chscase census2 & 8 & 1.30 & \textbf{0.12 $\pm$ 0.06} & \textbf{0.12 $\pm$ 0.02} & 0.20 $\pm$ 0.03 & 1.4 $\pm$ 0.2 & 1.81 $\pm$ 0.07 \\
chscase census6 & 10 & 1.29 & $\text{NA}$ & \textbf{0.18 $\pm$ 0.05} & 0.25 $\pm$ 0.06 & 1.2 $\pm$ 0.1 & 1.06 $\pm$ 0.09 \\
chscase vine1 & 7 & 1.21 & $\text{NA}$ & \textbf{0.09 $\pm$ NA} & 0.13 $\pm$ NA & 1.93 $\pm$ NA & 1.75 $\pm$ NA \\
confidence & 5 & 1.56 & $\text{NA}$ & \textbf{0.13 $\pm$ 0.06} & 0.16 $\pm$ 0.06 & 2.2 $\pm$ 0.4 & 0.24 $\pm$ 0.09 \\
diabetes & 9 & 1.24 & $\text{NA}$ & \textbf{0.15 $\pm$ 0.02} & 0.23 $\pm$ 0.03 & 1.8 $\pm$ 0.1 & 1.59 $\pm$ 0.05 \\
diabetes numeric & 12 & 1.42 & $\text{NA}$ & \textbf{0.4 $\pm$ 0.1} & 0.6 $\pm$ 0.3 & NA & 1.2 $\pm$ 0.3 \\
diggle table a1 & 7 & 1.39 & $\text{NA}$ & \textbf{0.17 $\pm$ NA} & 0.38 $\pm$ NA & 0.56 $\pm$ NA & 1.43 $\pm$ NA \\
disclosure x noise & 11 & 1.30 & \textbf{0.08 $\pm$ 0.02} & 0.15 $\pm$ 0.04 & 0.21 $\pm$ 0.04 & 1.4 $\pm$ 0.4 & 0.56 $\pm$ 0.06 \\
ecoli & 13 & 1.34 & \textbf{0.16 $\pm$ 0.2} & \textbf{0.16 $\pm$ 0.03} & 0.23 $\pm$ 0.04 & 1.4 $\pm$ 0.3 & 0.87 $\pm$ 0.06 \\
glass & 13 & 2.30 & \textbf{0.3 $\pm$ 0.1} & 2.1 $\pm$ 0.2 & 3.67 $\pm$ 0.09 & 13 $\pm$ 4 & 3.5 $\pm$ 0.4 \\
heart-statlog & 17 & 1.18 & \textbf{0.08 $\pm$ \text{NA}} & 0.17 $\pm$ 0.03 & 0.28 $\pm$ 0.05 & 2.2 $\pm$ 0.2 & 3.0 $\pm$ 0.1 \\
ionosphere & 13 & 3.15 & \textbf{1.11 $\pm$ \text{NA}} & 2.6 $\pm$ 0.4 & NA & 13 $\pm$ 1 & 7.4 $\pm$ 0.4 \\
iris & 20 & 1.24 & $0.15 \pm \text{NA}$ & \textbf{0.031 $\pm$ 0.008} & 0.057 $\pm$ 0.009 & 0.42 $\pm$ 0.05 & 0.46 $\pm$ 0.06 \\
kc1-top5 & 15 & 1.89 & \textbf{0.01 $\pm$ \text{NA}} & 1.1 $\pm$ 0.7 & 7.7 $\pm$ 0.5 & 23 $\pm$ 8 & 7 $\pm$ 3 \\
kc2 & 6 & 1.53 & $0.3 \pm 0.2$ & \textbf{0.18 $\pm$ 0.04} & 0.6 $\pm$ 0.3 & 2.8 $\pm$ 0.4 & 0.8 $\pm$ 0.1 \\
kc3 & 19 & 1.87 & \textbf{0.14 $\pm$ 0.03} & 0.7 $\pm$ 0.3 & 2.0 $\pm$ 0.3 & 15 $\pm$ 3 & 2.4 $\pm$ 0.6 \\
libras move & 12 & 1.19 & $\text{NA}$ & \textbf{0.01 $\pm$ NA} & 0.05 $\pm$ NA & 1.78 $\pm$ NA & 3.08 $\pm$ NA \\
liver-disorders & 6 & 1.30 & $0.23 \pm \text{NA}$ & \textbf{0.18 $\pm$ 0.04} & 0.24 $\pm$ 0.05 & 1.2 $\pm$ 0.1 & 0.94 $\pm$ 0.08 \\
longley & 11 & 1.06 & $\text{NA}$ & \textbf{0.2 $\pm$ 0.2} & 0.3 $\pm$ 0.2 & NA & 1.5 $\pm$ 0.1 \\
machine cpu & 12 & 1.56 & $0.5 \pm 0.1$ & \textbf{0.26 $\pm$ 0.06} & 0.39 $\pm$ 0.08 & 2.6 $\pm$ 0.8 & 0.9 $\pm$ 0.1 \\
mfeat-zernike & 12 & 1.17 & $\text{NA}$ & \textbf{0.20 $\pm$ 0.05} & 0.5 $\pm$ 0.1 & 3.8 $\pm$ 0.4 & 4.1 $\pm$ 0.2 \\
mu284 & 19 & 1.30 & $\text{NA}$ & \textbf{0.16 $\pm$ 0.05} & 0.3 $\pm$ 0.1 & 2.0 $\pm$ 0.3 & 1.1 $\pm$ 0.1 \\
no2 & 13 & 1.21 & $\text{NA}$ & \textbf{0.11 $\pm$ 0.03} & 0.18 $\pm$ 0.04 & 1.2 $\pm$ 0.1 & 1.23 $\pm$ 0.09 \\
pm10 & 8 & 1.20 & $\text{NA}$ & \textbf{0.14 $\pm$ 0.03} & 0.20 $\pm$ 0.04 & 1.3 $\pm$ 0.1 & 1.09 $\pm$ 0.07 \\
prnn fglass & 8 & 2.45 & $\text{NA}$ & \textbf{0.6 $\pm$ 0.5} & 2.6 $\pm$ 0.2 & 5 $\pm$ 2 & 3.3 $\pm$ 0.4 \\
pyrim & 6 & 1.33 & $\text{NA}$ & \textbf{0.69 $\pm$ NA} & 1.20 $\pm$ NA & 4.86 $\pm$ NA & 2.88 $\pm$ NA \\
rabe 131 & 10 & 1.10 & $\text{NA}$ & \textbf{0.08 $\pm$ NA} & 0.11 $\pm$ NA & 1.25 $\pm$ NA & 1.08 $\pm$ NA \\
sleep & 14 & 1.35 & $\text{NA}$ & \textbf{2 $\pm$ 1} & 3 $\pm$ 1 & NA & 3 $\pm$ 1 \\
sonar & 18 & 1.26 & $\text{NA}$ & \textbf{0.32 $\pm$ 0.09} & 0.9 $\pm$ 0.2 & 9.7 $\pm$ 0.9 & 7.2 $\pm$ 0.4 \\
steel-plates-fault & 18 & 1.21 & \textbf{0.4 $\pm$ 0.2} & 0.41 $\pm$ 0.04 & 0.65 $\pm$ 0.06 & 5.7 $\pm$ 0.3 & 2.7 $\pm$ 0.1 \\
strikes & 13 & 1.25 & $\text{NA}$ & \textbf{0.24 $\pm$ 0.04} & 0.33 $\pm$ 0.07 & 1.4 $\pm$ 0.1 & 0.9 $\pm$ 0.1 \\
tecator & 14 & 1.46 & $\text{NA}$ & \textbf{0.06 $\pm$ 0.03} & 0.12 $\pm$ 0.04 & 1.2 $\pm$ 0.1 & 5.1 $\pm$ 0.7 \\
triazines & 9 & 1.73 & $\text{NA}$ & \textbf{2 $\pm$ 1} & 8 $\pm$ 2 & 17 $\pm$ 6 & 6 $\pm$ 2 \\
vehicle & 8 & 1.16 & \textbf{0.01 $\pm$ \text{NA}} & 0.17 $\pm$ 0.03 & 0.27 $\pm$ 0.04 & 4.1 $\pm$ 0.3 & 1.59 $\pm$ 0.07 \\
wdbc & 5 & 1.30 & $0.73 \pm \text{NA}$ & \textbf{0.30 $\pm$ 0.05} & 0.6 $\pm$ 0.1 & 5.0 $\pm$ 0.5 & 3.6 $\pm$ 0.2 \\
wine & 17 & 1.16 & $0.55 \pm \text{NA}$ & \textbf{0.26 $\pm$ 0.07} & 0.42 $\pm$ 0.09 & 2.1 $\pm$ 0.8 & 2.3 $\pm$ 0.1 \\
wine-quality-red & 8 & 1.33 & \textbf{0.2 $\pm$ 0.1} & 0.23 $\pm$ 0.04 & 0.35 $\pm$ 0.05 & 2.9 $\pm$ 0.1 & 1.90 $\pm$ 0.08 \\
wine-quality-white & 10 & 1.28 & $0.40 \pm \text{NA}$ & \textbf{0.21 $\pm$ 0.01} & 0.33 $\pm$ 0.02 & 2.79 $\pm$ 0.08 & 1.61 $\pm$ 0.03 \\
wisconsin & 19 & 1.32 & \textbf{0.12 $\pm$ \text{NA}} & 0.4 $\pm$ 0.1 & 0.8 $\pm$ 0.3 & 7 $\pm$ 2 & 4.8 $\pm$ 0.3 \\
\hline
\end{tabular}
\label{tab:sensitivityProximity}
\end{table*}

\begin{table*}[h!]
\caption{Validity results for the sensitivity analysis over LOF's parameters, extended with the FullOpt validity results. DCFO demonstrates high robustness across both threshold sweeps and full optimisation.}
\centering                 
\begin{tabular}{l c c c c c c c}
\hline
Dataset & k & Threshold & FullOpt & DCFO & BayCon & EACE & Baseline \\
\hline
MeanWhile1 & 8 & 1.31 & 0.06 & \textbf{1.00} & 0.72 & 0.41 & 0.93 \\
MindCave2 & 5 & 1.54 & 0    & \textbf{1.00} & 0.89 & 0.78 & \textbf{1.00} \\
SPECTF & 20 & 1.39 & 0.00 & \textbf{1.00} & 0.66 & 0.13 & 0.84 \\
ar3 & 10 & 1.75 & 0    & \textbf{1.00} & 0.67 & 0.33 & 0.50 \\
ar5 & 8 & 1.38 & 0    & \textbf{1.00} & \textbf{1.00} & 0.00 & \textbf{1.00} \\
ar6 & 20 & 1.42 & 0.20 & \textbf{1.00} & 0.80 & 0.50 & 0.90 \\
autoPrice & 16 & 1.32 & 0    & \textbf{1.00} & 0.96 & 0.26 & 0.74 \\
baskball & 8 & 1.19 & 0    & \textbf{1.00} & \textbf{1.00} & 0.10 & 0.90 \\
blood-transfusion-service-center & 12 & 1.15 & 0    & \textbf{1.00} & 0.87 & 0.35 & \textbf{1.00} \\
bodyfat & 14 & 1.29 & 0    & \textbf{1.00} & 0.76 & 0.24 & \textbf{1.00} \\
chscase census2 & 8 & 1.30 & 0.12 & \textbf{1.00} & \textbf{1.00} & 0.46 & 0.88 \\
chscase census6 & 10 & 1.29 & 0    & \textbf{1.00} & 0.92 & 0.29 & 0.94 \\
chscase vine1 & 7 & 1.21 & -- & \textbf{1.00} & 0.80 & 0.20 & \textbf{1.00} \\
confidence & 5 & 1.56 & 0    & \textbf{1.00} & \textbf{1.00} & \textbf{1.00} & \textbf{1.00} \\
diabetes & 9 & 1.24 & 0    & \textbf{1.00} & 0.96 & 0.54 & 0.91 \\
diabetes numeric & 12 & 1.42 & 0    & \textbf{1.00} & \textbf{1.00} & 0.00 & \textbf{1.00} \\
diggle table a1 & 7 & 1.39 & 0    & \textbf{1.00} & \textbf{1.00} & 0.20 & \textbf{1.00} \\
disclosure x noise & 11 & 1.30 & 0.12 & \textbf{1.00} & 0.98 & 0.18 & 0.91 \\
ecoli & 13 & 1.34 & 0.08 & \textbf{1.00} & 0.84 & 0.26 & \textbf{1.00} \\
glass & 13 & 2.30 & 0.06 & \textbf{1.00} & 0.46 & 0.54 & 0.92 \\
heart-statlog & 17 & 1.18 & 0.50 & \textbf{1.00} & \textbf{1.00} & 0.50 & 0.89 \\
ionosphere & 13 & 3.15 & 0.01 & \textbf{1.00} & 0.00 & 0.97 & 0.97 \\
iris & 20 & 1.24 & 0.17 & \textbf{1.00} & \textbf{1.00} & 0.35 & \textbf{1.00} \\
kc1-top5 & 15 & 1.89 & 0.05 & \textbf{1.00} & 0.88 & 0.38 & \textbf{1.00} \\
kc2 & 6 & 1.53 & 0.06 & \textbf{1.00} & 0.72 & 0.45 & 0.91 \\
kc3 & 19 & 1.87 & 0.08 & \textbf{1.00} & 0.67 & 0.50 & 0.94 \\
libras move & 12 & 1.19 & 0    & \textbf{1.00} & 0.49 & 0.02 & 0.88 \\
liver-disorders & 6 & 1.30 & 0.05 & \textbf{1.00} & 0.98 & 0.46 & 0.91 \\
longley & 11 & 1.06 & -- & \textbf{1.00} & \textbf{1.00} & 0.00 & \textbf{1.00} \\
machine cpu & 12 & 1.56 & 0.06 & \textbf{1.00} & \textbf{1.00} & 0.23 & 0.88 \\
mfeat-zernike & 12 & 1.17 & 0    & \textbf{1.00} & 0.71 & 0.31 & 0.97 \\
mu284 & 19 & 1.30 & 0    & \textbf{1.00} & 0.82 & 0.43 & 0.86 \\
no2 & 13 & 1.21 & 0    & \textbf{1.00} & 0.98 & 0.46 & 0.98 \\
pm10 & 8 & 1.20 & 0    & \textbf{1.00} & 0.96 & 0.45 & 0.89 \\
prnn fglass & 8 & 2.45 & 0    & \textbf{1.00} & 0.55 & 0.55 & \textbf{1.00} \\
pyrim & 6 & 1.33 & 0    & \textbf{1.00} & 0.91 & 0.09 & \textbf{1.00} \\
rabe 131 & 10 & 1.10 & 0    & \textbf{1.00} & \textbf{1.00} & 0.25 & 0.75 \\
sleep & 14 & 1.35 & 0    & \textbf{1.00} & 0.75 & 0.00 & \textbf{1.00} \\
sonar & 18 & 1.26 & 0    & \textbf{1.00} & 0.96 & 0.39 & 0.93 \\
steel-plates-fault & 18 & 1.21 & 0.07 & \textbf{1.00} & 0.54 & 0.46 & 0.87 \\
strikes & 13 & 1.25 & 0    & \textbf{1.00} & 0.86 & 0.43 & 0.98 \\
tecator & 14 & 1.46 & 0    & \textbf{1.00} & 0.82 & 0.14 & 0.95 \\
triazines & 9 & 1.73 & 0    & \textbf{1.00} & \textbf{1.00} & 0.20 & 0.85 \\
vehicle & 8 & 1.16 & 0.20 & \textbf{1.00} & 0.65 & 0.62 & 0.94 \\
wdbc & 5 & 1.30 & 0.04 & \textbf{1.00} & 0.82 & 0.49 & 0.93 \\
wine & 17 & 1.16 & 0.08 & \textbf{1.00} & 0.75 & 0.29 & 0.79 \\
wine-quality-red & 8 & 1.33 & 0.08 & \textbf{1.00} & 0.85 & 0.65 & 0.97 \\
wine-quality-white & 10 & 1.28 & 0.01 & \textbf{1.00} & 0.84 & 0.67 & 0.95 \\
wisconsin & 19 & 1.32 & 0.10 & \textbf{1.00} & \textbf{1.00} & 0.30 & 0.85 \\
\hline
\end{tabular}
\label{tab:sensitivityValidity}
\end{table*}

\section{Space partition ablation study}

In order to properly analyse the benefits of the space partition, we run our gradient-based constrained optimisation without giving the algorithm access to regions, we call this baseline FullOpt. As the non-continuity of the LOF function would suggest, FullOpt fails to retrieve counterfactuals when the algorithm leaves the starting region. When instead the counterfactual is close enough to the original instance to be in the same region, FullOpt works with a fully differentiable function and finds the same counterfactual as DCFO. Table~\ref{tab:sensitivityValidity} and Table~\ref{tab:sensitivityProximity} show FullOpt's proximity and validity results. The tables highlight how FullOpt fails to reliably produce counterfactual explanations as, without access to regions, LOF's being non-continuous prevents the gradient-based optimiser to find a valid solution. In table~\ref{tab:sensitivityProximity}, FullOpt seems to show better proximity than DCFO for some datasets. In reality, when FullOpt does find a counterfactual, it finds the same one as DCFO but only on the "easy" case in which the optimisation does not leave the initial region i.e. the algorithm exploits only an area where LOF is differentiable. For all the more challenging cases in which the optimisation leaves the starting region, FullOpt fails to find a counterfactual. Thus, the proximity results need to be read jointly with the validity numbers in Table~\ref{tab:sensitivityValidity}.

\end{document}

\endinput